\def\eqref#1{equation~\ref{#1}}
\def\1{\bm{1}}
\def\rvx{{\mathbf{x}}}
\def\rvz{{\mathbf{z}}}
\DeclareMathAlphabet{\mathsfit}{\encodingdefault}{\sfdefault}{m}{sl}
\SetMathAlphabet{\mathsfit}{bold}{\encodingdefault}{\sfdefault}{bx}{n}
\def\rvz{{\mathbf{z}}}
\def\rvx{\mathbf{x}}
\newcommand{\printfnsymbol}[1]{%
  \textsuperscript{\@fnsymbol{#1}}%
}
\begin{document}
\pagestyle{headings}
\mainmatter
\def\ECCVSubNumber{6124}  
\title{Unsupervised Domain Adaptation for Semantic Segmentation of NIR Images through Generative Latent Search}
\titlerunning{UDA for Semantic Segmentation of NIR Images through GLS}
%
\author{Prashant Pandey\thanks{equal contribution}\orcidID{0000-0002-6594-9685} \and
Aayush Kumar Tyagi\index{Tyagi, Aayush Kumar}\printfnsymbol{1}\orcidID{0000-0002-3615-7283} \and
Sameer Ambekar\orcidID{0000-0002-8650-3180} \and \\ Prathosh AP\orcidID{0000-0002-8699-5760}}
%
\authorrunning{P. Pandey et al.}
\institute{Indian Institute of Technology Delhi \\
\email{getprashant57@gmail.com, aayush16081@iiitd.ac.in,
ambekarsameer@gmail.com,
prathoshap@iitd.ac.in}}
\maketitle
\begin{abstract}
Segmentation of the pixels corresponding to human skin is an essential first step in multiple applications ranging from surveillance to heart-rate estimation from remote-photoplethysmography. However, the existing literature considers the problem only in the visible-range of the EM-spectrum which limits their utility in low or no light settings where the criticality of the application is higher. To alleviate this problem, we consider the problem of skin segmentation from the Near-infrared images. However, Deep learning based state-of-the-art segmentation techniques demands large amounts of labelled data that is unavailable for the current problem. Therefore we cast the skin segmentation problem as that of target-independent Unsupervised Domain Adaptation (UDA) where we use the data from the Red-channel of the visible-range to develop skin segmentation algorithm on NIR images. We propose a method for target-independent segmentation where the `nearest-clone' of a target image in the source domain is searched and used as a proxy in the segmentation network trained only on the source domain. We prove the existence of `nearest-clone' and propose a method to find it through an  optimization algorithm  over the latent space of a Deep generative model based on variational inference. We demonstrate the efficacy of the proposed method for NIR skin segmentation over  the  state-of-the-art UDA  segmentation  methods on  the two newly  created skin segmentation datasets in NIR domain despite not having access to the target NIR data. Additionally, we report state-of-the-art results for adaption from Synthia to Cityscapes  which is a popular setting in Unsupervised Domain Adaptation for semantic segmentation. The code and datasets are available at https://github.com/ambekarsameer96/GLSS.

\keywords{Unsupervised Domain Adaptation, Semantic segmentation, Near IR Dataset, VAE }
\end{abstract}

\section{Introduction}
\subsection{Background}
Human skin segmentation is the task of finding pixels corresponding to skin from images or videos. It serves as a necessary pre-processing step for multiple applications like video surveillance,
people tracking, human computer interaction, face detection and recognition, facial gesture detection and monitoring heart rate and respiratory rate \cite{prathosh2017estimation,mahmoodi2017high,chen2016skin,mahmoodi2016comprehensive} using remote photoplethysmography.  Most of the research efforts on skin detection have focused on visible spectrum images because of the challenges that it poses including, illumination change, ethnicity change and presence of background/clothes similar to skin colour. These factors adversely affect the applications where skin is used as conjugate information. Further, the algorithms that rely on visible spectrum images cannot be employed in the low/no light conditions especially during night times where the criticality of the application like human detection is higher. 
These problems which are  encountered in visible spectrum domain can be overcome by considering the images taken in the Near-infrared  (NIR) domain \cite{kong2005recent} or hyper spectral imaging \cite{pan2003face}. The information about the skin pixels is invariant of factors such as illumination conditions, ethnicity etc., in these domains. Moreover, most of the surveillance cameras that are used world-wide are NIR imaging devices. Thus, it is meaningful to pursue  the endeavour of detecting the skin pixels from the NIR images. 

\subsection{Problem setting and contributions}
The task of detection of skin pixels from an image is typically cast as a segmentation problem. Most of the classical approaches relied on the fact that the skin-pixels have a distinctive color pattern \cite{huynh2002skin,erdem2011combining} compared to other objects. In recent years, harnessing the power of Deep learning, skin segmentation problem has been dealt with using deep neural networks that show significant performance enhancement over the traditional methods \cite{long2015fully,ronneberger2015u,jones2002statistical}, albeit generalization across different illuminations still remains a challenge. 
While there exists sufficient literature on skin segmentation in the visible-spectrum, there is very little work done on segmenting the skin pixels in the NIR domain. Further, all the state-of-the-art Deep learning based segmentation algorithms demand large-scale annotated datasets to achieve good performance which is available in the case of visible-spectrum images but not the NIR images. Thus, building a fully-supervised skin segmentation network from scratch is not feasible for the NIR images because of the unavailability of the large-scale annotated data. However, the underlying concept of `skin-pixels' is the same across the images irrespective of the band in which they were captured. Additionally, the NIR and the Red-channel of the visible-spectrum are close in terms of their wavelengths.  Owing to these observations, we pose the following question in this paper - Can the labelled data (source) in the visible-spectrum (Red-channel) be used to perform skin segmentation in the NIR domain (target) \cite{pandey2020guided}? 
\par We cast the problem of skin segmentation from NIR images as a target-independent Unsupervised Domain Adaptation (UDA) task \cite{pandey2020target} where we consider the Red-channel of the visible-spectrum images as the source domain and NIR images as the target domain. The state-of-the-art UDA techniques demand access to the target data, albeit unlabelled, to adapt the source domain features to the target domain. In the present case, we do not assume existence of any data from the target domain, even unlabelled. This is an important desired attribute which ensures that a model trained on the Red-channel does not need any retraining with the data from NIR domain. The core idea is to sample the `nearest-clone' in the source domain to a given test image from the target domain. This is accomplished through a simultaneous sampling-cum-optimization procedure using a latent-variable deep neural generative network learned on the source distribution. Thus, given a target sample, its `nearest-clone' from the source domain is sampled and used as a proxy in the segmentation network trained only on the samples of the source domain. Since the segmentation network performs well on the source domain, it is expected to give the correct segmentation mask on the `nearest-clone' which is then assigned to the target image. Specifically, the core contributions of this work are listed as follows: 

\begin{enumerate}
    \item We cast the problem of skin segmentation from NIR images as a UDA segmentation task where we use the data from the Red-channel of the visible-range of the EM-spectrum to develop skin segmentation algorithm on NIR images.
    \item We propose a method for target-independent segmentation where the `nearest-clone' of a target image in the source domain is searched and used as a proxy in the segmentation network trained only on the source domain. 
    \item We theoretically prove the existence of the `nearest-clone' given that it can be sampled from the source domain with infinite data points. 
    \item We develop a joint-sampling and optimization algorithm using variational inference based generative model to search for the `nearest-clone' through implicit sampling in the source domain. 
    \item We demonstrate the efficacy of the proposed method for NIR skin segmentation over the state-of-the-art UDA segmentation methods on the two newly created skin segmentation datasets in NIR domain. The proposed method is also shown to reach SOTA performance on standard segmentation datasets like Synthia \cite{ros2016synthia} and Cityscapes \cite{cordts2016cityscapes}. 
\end{enumerate}
\section{Related Work}
In this section, we first review the existing methods for skin segmentation in the visible-range followed by a review of UDA methods for segmentation. 
\subsection{Skin Segmentation in Visible-range}
Methods for skin segmentation can be grouped into three categories, i.e. (i) Thresholding based methods \cite{kovac2003human, erdem2011combining, qiang2010robust}, (ii) Traditional machine learning techniques to learn a skin color model \cite{liu2010robust ,zaidan2014multi}, (iii) Deep learning based methods to learn an end-to-end model for skin segmentation \cite{al2013impact , wu2012skin , seow2003neural , chen2016skin , he2019semi}.  
The thresholding based methods focus  on  defining  a  specified  range  in  different color  representation spaces like (HSV)\cite{moallem2011novel} and orthogonal color space (YCbCr)\cite{hsu2002face,brancati2017human} to differentiate skin pixels. Traditional machine learning can be further divided into pixel based and region based methods. In pixel based methods, each pixel is classified as skin or non-skin without considering the neighbours \cite{taqa2010increasing} whereas region based approaches use spatial information to identify similar regions \cite{chen2007region}. 
In recent years, Fully convolutional neural networks (FCN) are employed to solve the problem \cite{long2015fully}.
\cite{ronneberger2015u} proposed a UNet architecture, consisting of an encoder-decoder structure with backbones like InceptionNet\cite{szegedy2016rethinking} and ResNet \cite{he2016deep}.
Holistic skin segmentation \cite{dourado2019domain} combine inductive transfer learning and UDA. They term this technique as cross domain pseudo-labelling and use it in an iterative manner  to  train and fine tune the model on the target domain. 
\cite{he2019semi} propose mutual guidance to improve skin detection with the usage of body masks as guidance. They use dual task neural network for joint detection with shared encoder and two decoders for detecting skin and body simultaneously. While all these methods offer different advantages, they do not generalize to low-light settings with NIR images, which we aim to solve through UDA. 
\subsection{Domain Adaptation for semantic segmentation}
Unsupervised Domain Adaptation aims to improve the performance  of deep neural networks on a target domain, using labels only from a source domain. UDA for segmentation task can be grouped into following  categories:
\subsubsection{Adversarial training based methods:}
These methods use the principles of adversarial learning \cite{hoffman2016fcns}, which generally consists of two networks. One predicts the segmentation mask of the input image coming from either source or target distribution while the other network acts as discriminator which tries to predict the domain of the images.
AdaptSegNet \cite{tsai2018learning} exploits structural similarity between the source and target domains in a multi-level adversarial network  framework. 
ADVENT \cite{vu2018advent} introduce entropy-based loss to directly penalize low-confident predictions on target domain. Adversarial training is used for structural adaptation of the target domain to the source domain.
CLAN \cite{luo2019taking}  considers category-level joint distribution and aligns each class with an adaptive adversarial loss. They reduce the weight of the adversarial loss for category-level aligned features while increasing the adversarial force for those that are poorly aligned. DADA \cite{vu2019dada} uses the geometry of the scene by simultaneously aligning the segmentation and depth-based information of source and target domains using adversarial training.
\subsubsection{Feature-transformation based methods:}
These methods are based on the idea of learning image-level or feature-level transformations between the source and the target domains. CyCADA \cite{Hoffman_cycada2017} adapts between domains using
both generative image space alignment and latent
representation space alignment. Image level adaptation is achieved with cycle loss, semantic consistency loss and pixel-level GAN loss while feature level adaptation employs feature-level GAN loss and task loss between true and predicted labels.
DISE \cite{chang2019all} aims to discover a domain-invariant structural feature by learning to  disentangle domain-invariant structural information of an image from its domain-specific texture information.
BDL \cite{li2019bidirectional}  involves  two  separated  modules a) image-to-image translation model b) segmentation adaptation  model, in two directions namely `translation-to-segmentation' and `segmentation-to-translation'. 
\section{Proposed method}
Most of the UDA methods assume access to the unlabelled target data which may not be available at all times. In this work, we propose a UDA segmentation technique by learning to find a data point from the source that is arbitrarily close (called the `nearest-clone') to a given target point so that it can used as a proxy in the segmentation network trained only on the source data. In the subsequent sections, we describe the methodology used to find the `nearest-clone' from the source distribution to a given target point. 
\subsection{Existence of nearest source point}
To start with, we show that for a given target data point, there exists a corresponding source data point,  that is arbitrarily close to, provided that infinite data points can be sampled from the source distribution. Mathematically, let $\mathcal{P}_s(\mathbf{x})$ denotes the source distribution and $\mathcal{P}_t(\mathbf{x})$ denotes any target distribution that is similar but not exactly same as $\mathcal{P}_s$ (Red-channel images are source and NIR images are target). Let the underlying random variable on which $\mathcal{P}_s$  and $\mathcal{P}_t$ are defined form a separable metric space $\{\mathscr{X,D}\}$ with $\mathscr{D}$ being some distance metric. Let $\mathcal{S}_n=\{\mathbf{x}_1, \mathbf{x}_2, \mathbf{x}_3,...., \mathbf{x}_n  \}$ be i.i.d points drawn from $\mathcal{P}_s(\mathbf{x})$ and $\tilde{\mathbf{x}}$ be a point from $\mathcal{P}_t(\mathbf{x})$. With this, the following lemma shows the existence of the `nearest-clone'. 
\begin{lemma}
If $\tilde{{\mathbf{x}}}_\mathcal{S} \in \mathcal{S}_n$ is the point such that $\mathscr{D\{\tilde{\mathbf{x}}},\tilde{{\mathbf{x}}}_\mathcal{S}\}< \mathscr{D\{\tilde{\mathbf{x}}},\mathbf{x} \} \ \forall \mathbf{x}\ \in\ \mathcal{S}_n $, as ${n\to\infty}$ (in $\mathcal{S}_n$), $\tilde{{\mathbf{x}}}_\mathcal{S}$ converges to  $\tilde{\mathbf{x}}$ with probability $1$. 
\end{lemma}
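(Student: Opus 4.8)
The plan is to reduce the claim to an elementary Borel--Cantelli estimate on the probability that an i.i.d.\ source sample misses a small ball around $\tilde{\mathbf{x}}$. The first thing to pin down is the precise sense in which $\mathcal{P}_t$ is ``similar'' to $\mathcal{P}_s$: the argument requires that $\tilde{\mathbf{x}}$ lie in the support of $\mathcal{P}_s$, i.e.\ $\mathrm{supp}(\mathcal{P}_t)\subseteq\mathrm{supp}(\mathcal{P}_s)$, so that every realization $\tilde{\mathbf{x}}\in\mathrm{supp}(\mathcal{P}_t)$ --- hence $\mathcal{P}_t$-almost every $\tilde{\mathbf{x}}$ --- has the property that for every $\epsilon>0$ the open ball $B(\tilde{\mathbf{x}},\epsilon)=\{\mathbf{x}\in\mathscr{X}:\mathscr{D}(\tilde{\mathbf{x}},\mathbf{x})<\epsilon\}$ satisfies $p_\epsilon:=\mathcal{P}_s\!\left(B(\tilde{\mathbf{x}},\epsilon)\right)>0$. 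Separability of $\{\mathscr{X},\mathscr{D}\}$ is exactly what is needed here: it guarantees that $\mathcal{P}_s$ admits a well-defined support of full measure and that a point lies in that support iff each of its open neighbourhoods has positive measure; it also makes the nearest-neighbour assignment $\mathcal{S}_n\mapsto\tilde{\mathbf{x}}_\mathcal{S}$ measurable, so the probabilities below are meaningful.

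Next I would fix $\epsilon>0$ and set $D_n:=\mathscr{D}(\tilde{\mathbf{x}},\tilde{\mathbf{x}}_{\mathcal{S}_n})=\min_{1\le i\le n}\mathscr{D}(\tilde{\mathbf{x}},\mathbf{x}_i)$ (the tie-breaking implicit in the strict inequality of the statement is irrelevant for this minimum). Since the $\mathbf{x}_i$ are i.i.d.\ $\sim\mathcal{P}_s$, the event $\{D_n\ge\epsilon\}$ is precisely the event that no $\mathbf{x}_i$ falls in $B(\tilde{\mathbf{x}},\epsilon)$, so $\Pr[D_n\ge\epsilon]=(1-p_\epsilon)^n$. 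As $p_\epsilon>0$ this is the tail of a convergent geometric series, hence $\sum_{n\ge 1}\Pr[D_n\ge\epsilon]<\infty$, and the first Borel--Cantelli lemma gives $\limsup_n D_n<\epsilon$ almost surely. (If one reads ``$n\to\infty$ in $\mathcal{S}_n$'' as a nested sequence of draws, $D_n$ is non-increasing and one can skip Borel--Cantelli, concluding directly from $\Pr[\lim_n D_n\ge\epsilon]\le\Pr[D_n\ge\epsilon]=(1-p_\epsilon)^n\to0$.) Finally, intersecting the probability-one events obtained for $\epsilon=1/k$, $k\in\mathbb{N}$, yields a single event of probability $1$ on which $\limsup_n D_n<1/k$ for all $k$, i.e.\ $D_n\to0$; equivalently $\tilde{\mathbf{x}}_{\mathcal{S}_n}\to\tilde{\mathbf{x}}$ with probability $1$.

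The substantive obstacle is conceptual rather than computational: the conclusion is false without some form of the support-containment hypothesis (if $\tilde{\mathbf{x}}$ were at positive distance from $\mathrm{supp}(\mathcal{P}_s)$ then $D_n$ would stay bounded away from $0$), so the delicate part is justifying $p_\epsilon>0$ for all $\epsilon$, which is where ``$\mathcal{P}_t$ similar to $\mathcal{P}_s$'' and the separability of $\mathscr{X}$ do the real work. Once that is in place, the remaining steps are the routine geometric-tail and countable-union arguments sketched above.
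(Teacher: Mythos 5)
Your proof is correct and its core is the same calculation the paper uses: the probability that all $n$ i.i.d.\ source draws miss a ball of radius $\delta$ around $\tilde{\mathbf{x}}$ is $\bigl(1-\mathcal{P}_s(\mathds{B}_\delta(\tilde{\mathbf{x}}))\bigr)^n$, which vanishes because the ball has positive measure. Where you go beyond the paper is in two places, and both additions matter. First, the paper asserts $\mathbf{Prob}(\mathds{B}_r(\tilde{\mathbf{x}}))>0$ ``since $\mathscr{X}$ is a separable metric space,'' citing Cover and Hart; but separability alone does not give this for an arbitrary $\tilde{\mathbf{x}}$ drawn from a \emph{different} distribution $\mathcal{P}_t$ --- in Cover--Hart the query point is drawn from the same law as the sample, so it lies in the support almost surely. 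You correctly isolate the hidden hypothesis $\mathrm{supp}(\mathcal{P}_t)\subseteq\mathrm{supp}(\mathcal{P}_s)$ (this is evidently what the paper means by $\mathcal{P}_t$ being ``similar'' to $\mathcal{P}_s$), and you correctly observe the lemma is false without it. Second, the paper's displayed computation only shows that for each fixed $\delta$ the probability of landing in $\mathds{B}_\delta(\tilde{\mathbf{x}})$ tends to $1$, i.e.\ convergence in probability of the nearest-neighbour distance; the stated conclusion is convergence \emph{with probability $1$}. Your Borel--Cantelli step (or, for nested samples, the monotonicity of $D_n$) together with the countable intersection over $\epsilon=1/k$ is exactly the missing upgrade from convergence in probability to almost-sure convergence. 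So your proposal is not a different route so much as a completed version of the paper's route: same geometric tail bound, plus the two justifications the paper elides.
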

\begin{proof}
Let $\mathds{B}_r({\tilde{\mathbf{x}}}) = \{\mathbf{x}:\mathscr{D\{\tilde{\mathbf{x}}},\mathbf{x} \}\leq r\}$ be a closed ball of radius $r$ around $\tilde{\mathbf{x}}$ under the metric $\mathscr{D}$. Since $\mathscr{X}$ is a separable metric space \cite{cover1967nearest},  
\begin{equation}\mathbf{Prob}\big(\mathds{B}_r({\tilde{\mathbf{x}}})\big) \triangleq \int\limits_{\mathds{B}_r({\tilde{\mathbf{x}}})}\mathcal{P}_s(\mathbf{x}) \ d\mathbf{x} > 0, \forall r>0,
\end{equation}
With this, for any $\delta>0$, the probability that  none of the points in $\mathcal{S}_n$ are within the ball $\mathds{B}_\delta({\tilde{\mathbf{x}}})$ of radius $\delta$ is:
\begin{equation}
\mathbf{Prob}\bigg[ \min \limits_{i=1,2..,n} \mathscr{D\{\mathbf{x}_\textit{i},\tilde{\mathbf{x}}}\} \geq \delta  \bigg] = \big[ 1- \mathbf{Prob}\big(\mathds{B}_\delta({\tilde{\mathbf{x}}})\big) \big]^n
\end{equation}Therefore, the probability of $\tilde{{\mathbf{x}}}_\mathcal{S}$ (the closest point to $\tilde{\mathbf{x}}$) lying within $\mathds{B}_\delta({\tilde{\mathbf{x}}})$ is: 
\begin{align}
\mathbf{Prob}\bigg[ \tilde{{\mathbf{x}}}_\mathcal{S} \in  \mathds{B}_\delta({\tilde{\mathbf{x}}})  \bigg] &= 1 -\big[ 1- \mathbf{Prob}\big(\mathds{B}_\delta({\tilde{\mathbf{x}}})\big) \big]^n\\
&= 1\ \ as \ n\rightarrow \infty
\end{align}
Thus, given any infinitesimal $\delta>0$, with probability $1$, $\exists\ \tilde{{\mathbf{x}}}_\mathcal{S} \in \mathcal{S}_n$ (`nearest-clone') that is within  $\delta$ distance from $\tilde{\mathbf{x}}$ as $n\rightarrow \infty$\qed
 \end{proof}
\begin{figure*}[h]
\centering
    \includegraphics[width=1.0\textwidth,height=.4\textwidth]{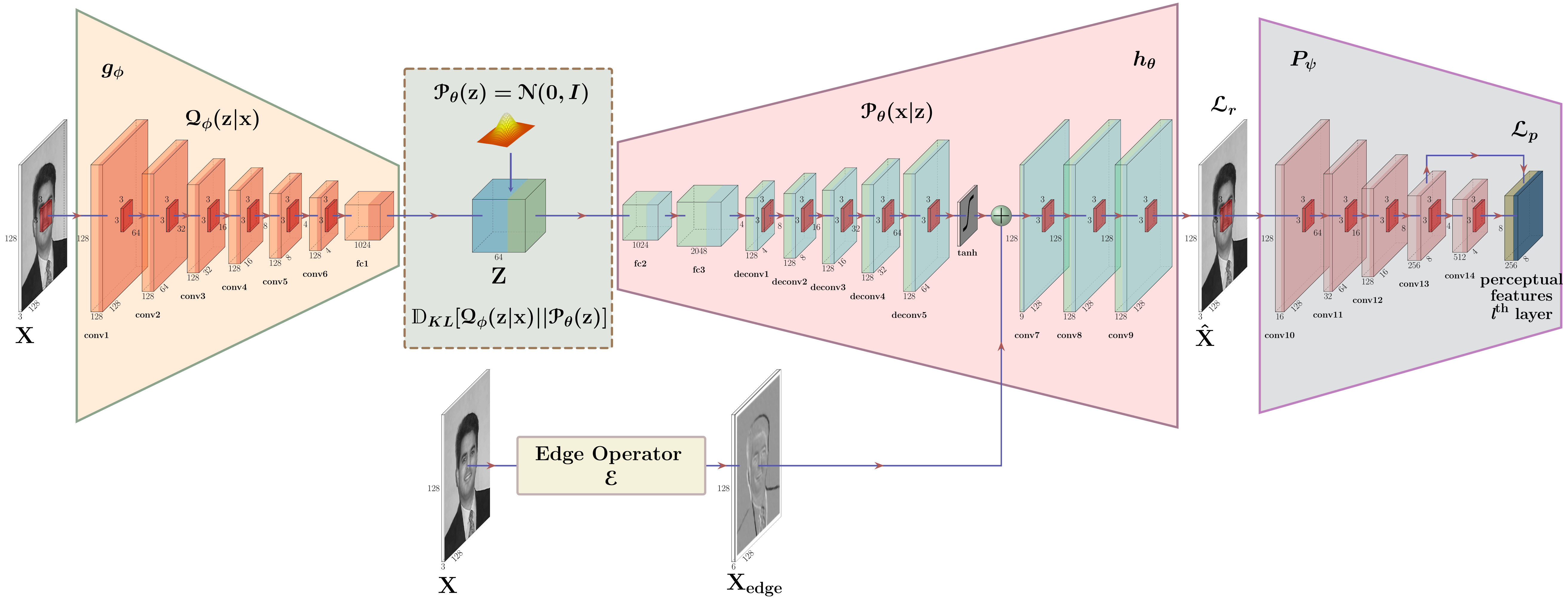}
    \caption{VAE training. Edges of an input image are concatenated with the features from the decoder $h_{\theta}$. Encoder and decoder parameters $\phi$, $\theta$ are optimized with reconstruction loss $\mathcal{L}_{r}$, KL-divergence loss $\mathds{D}_{KL}$ and perceptual loss $\mathcal{L}_{p}$. Perceptual model $P_{\psi}$ is trained on source samples. A zero mean and unit variance isotropic Gaussian prior is imposed on the latent space $\bm{\textbf{z}}$. }
    \label{fig:architecture}
\end{figure*} 
While Lemma 1 guarantees the existence of a `nearest-clone', it demands the following two conditions: 
\begin{itemize}
\item It should be possible to sample infinitely from the source distribution $\mathcal{P}_s$.
\item It should be possible to search for the `nearest-clone' in the $\mathcal{P}_s$, for a target sample $\tilde{\mathbf{x}}$ under the distance metric $\mathscr{D}$.
\end{itemize}
We propose to employ Variational Auto-encoding based sampling models on the source distribution to simultaneously sample and find the `nearest-clone' through an optimization over the latent space.  
\subsection{Variational Auto-Encoder for source sampling}
Variational Auto-Encoders (VAEs) \cite{kingma2013auto} are a class of latent-variable generative models that are based on the principles of variational inference where the variational distribution, $\mathcal{Q}_\phi(\mathbf{z|x})$ is used to approximate the intractable true posterior $\mathcal{P}_\theta(\mathbf{z|x})$. The log-likelihood of the observed data is decomposed into two terms, an irreducible non-negative KL-divergence between  $\mathcal{P}_\theta(\mathbf{z|x})$ and  $\mathcal{Q}_\phi(\mathbf{z|x})$ and the Evidence Lower Bound (ELBO) term which is given by Eq. \ref{elbo1}.
\begin{figure}[h]
\begin{center}
    \includegraphics[width=0.69\textwidth,height=.42\textwidth]{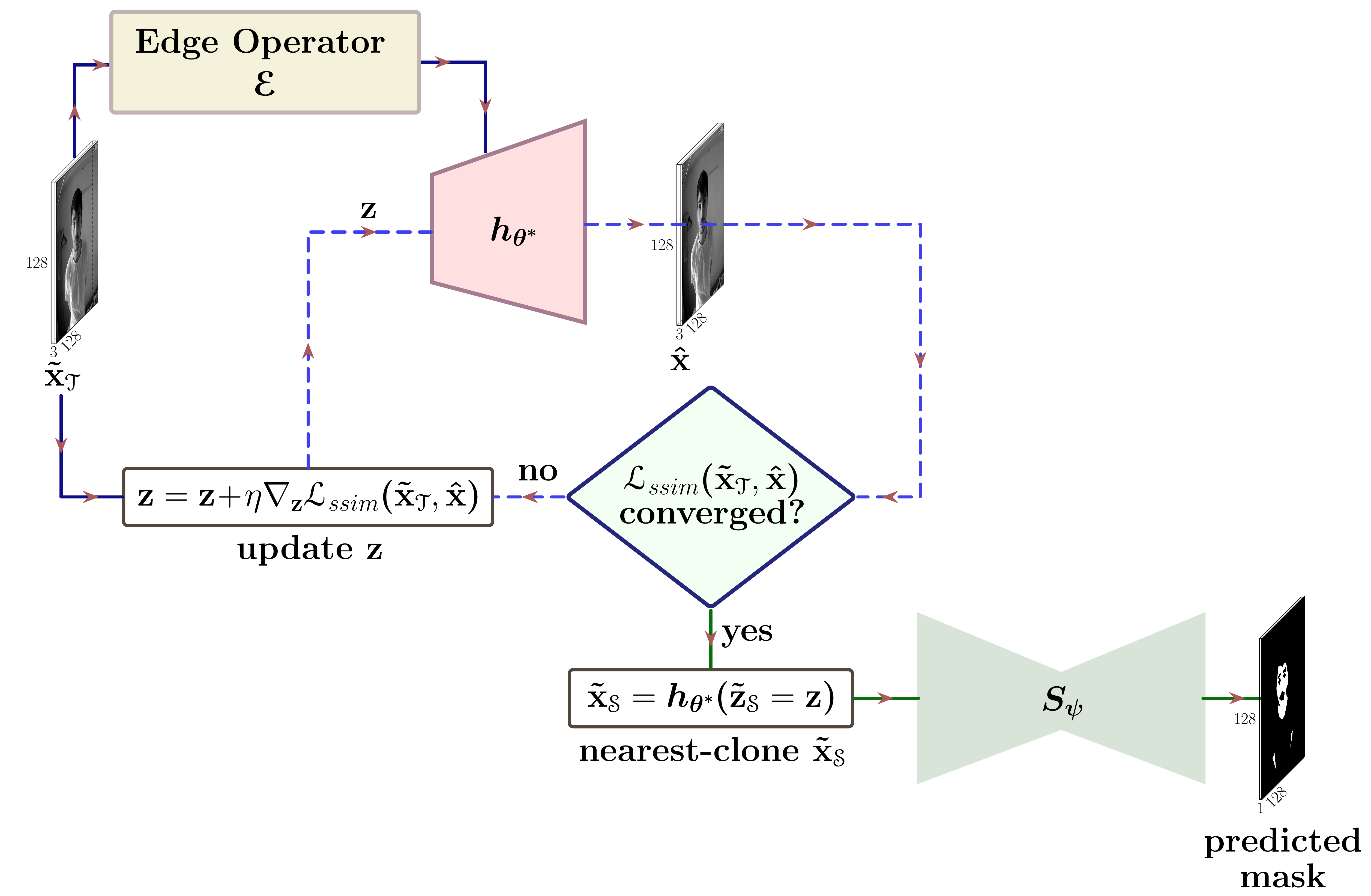}
\caption{Latent Search procedure during inference with GLSS. The latent vector $\mathbf{z}$ is initialized with a random sample drawn from $\mathcal{N}(0,1)$. Iterations over the latent space $\rvz$ are performed to minimize the $\mathcal{L}_{ssim}$ loss between the input target image $\bm{\tilde{\mathbf{x}}}_{\mathcal{T}}$ and the predicted target image $\mathbf{\hat x}$ (blue dotted lines). After convergence of $\mathcal{L}_{ssim}$ loss, the optimal latent vector $\bm{\tilde{\mathbf{z}}}_{\mathcal{S}}$, generates the closest clone $ \bm{\tilde{\mathbf{x}}}_{\mathcal{S}}$ which is used to predict the mask of $\bm{\tilde{\mathbf{x}}}_{\mathcal{T}}$ using the segmentation network $S_\psi$.}
\label{fig:LatentSearch}
\end{center}
\end{figure}
\begin{equation}
 \label{elbo1}
\ln\mathcal{P}_{\theta}(\mathbf{x})=\mathcal{L(\theta,\phi)}+\mathds{D}_{KL}[\mathcal{Q}_{\phi}(\mathbf{z}|\mathbf{x})||\mathcal{P}_{\theta}(\mathbf{z|x})]
\end{equation} where, 
\begin{equation}
\mathcal{L(\theta,\phi)=\mathbb{E}}_{\mathcal{Q}_{\phi}(\mathbf{z}|\mathbf{x})}[\ln\left(\mathcal{P}_{\theta}(\mathbf{x|}\mathbf{z})\right)]-\mathds{D}_{KL}[\mathcal{Q}_{\phi}(\mathbf{z}|\mathbf{x})||\mathcal{P}_{\theta}(\mathbf{z})]
    \label{elbo}
\end{equation}
The non-negative KL-term in Eq. \ref{elbo1} is irreducible and thus,  $\mathcal{L(\theta,\phi)}$ serves as a lower bound on the data log-likelihood which is maximized in a VAE by parameterizing $\mathcal{Q}_{\phi}(\mathbf{z|x})$ and $\mathcal{P}_{\phi}(\mathbf{x|z})$ using probabilistic encoder $g_\phi$ (that outputs the parameters $\mu_\rvz$ and $\sigma_\rvz$ of a distribution) and decoder $h_\theta$ neural networks. The latent prior $\mathcal{P}_{\theta}(\mathbf{z})$ is taken to be arbitrary prior on $\mathbf{z}$ which is usually a $0$ mean and unit variance Gaussian distribution. After training, the decoder network is used as a sampler for $\mathcal{P}_s(\mathbf{x})$ in a two-step process: (i) Sample $\mathbf{z} \sim \mathcal{N}(0,I)$, (ii) Sample $\mathbf{x}$ from $\mathcal{P}_\theta(\mathbf{x|z})$.
The likelihood term in Eq. \ref{elbo1} is approximated using norm based losses and it is known to result in blurry images. Therefore, we use the perceptual loss \cite{johnson2016perceptual} along with the standard norm based losses. Further, since the edges in images are generally invariant across the source and target domains, we extract edge of the input image and use it in the decoder of the VAE via a skip connection, as shown in Fig. \ref{fig:architecture}. This is shown to reduce the blur in the generated images. Fig. \ref{fig:architecture} depicts the entire VAE architecture used for training on the source data. 

\subsection{VAE Latent Search for finding the `nearest-clone'}
As described, the objective of the current work is to search for the nearest point in the source distribution, given a sample from target distribution. The decoder $h_\theta$ of a VAE trained on the source distribution $\mathcal{P}_s(\mathbf{x})$, outputs a new sample using the Normally distributed latent sample as input. That is, 
\begin{equation}
\forall \mathbf{z}\sim \mathcal{N}(0,I), \hat{\mathbf{x}}=h_\theta(\mathbf{z})\sim \mathcal{P}_s(\hat{\mathbf{x}})
\end{equation}

With this, our goal is to find the `nearest-clone' to a given target sample. That is, given a $\tilde{\mathbf{x}} \sim \mathcal{P}_t(\mathbf{x})$, find $\tilde{{\mathbf{x}}}_\mathcal{S}$ as follows:

\begin{equation}
\tilde{{\mathbf{x}}}_\mathcal{S}=h_\theta(\tilde{{\mathbf{z}}}_\mathcal{S}):\bigg \{ \mathscr{D\{\tilde{\mathbf{x}}},\tilde{{\mathbf{x}}}_\mathcal{S}\}< \mathscr{D\{\mathbf{x}},\tilde{\mathbf{x}} \} \ \forall {\mathbf{x}}\ =  h_\theta(\mathbf{z}) \sim \mathcal{P}_s({\mathbf{x}})
\label{obj}
\end{equation}

Since $\mathscr{D}$ is pre-defined and $h_\theta(\mathbf{z})$ is a deep neural network, finding $\tilde{{\mathbf{x}}}_\mathcal{S}$ can be cast as an optimization problem over $\mathbf{z}$ with minimization of $\mathscr{D}$ as the objective. Mathematically, 

\begin{equation}
\tilde{{\mathbf{z}}}_\mathcal{S}= \underset{\mathbf{z}}{\arg\min}\ \mathscr{D}\big ( \tilde{\mathbf{x}}, h_\theta(\mathbf{z}) \big )
\label{obj}
\end{equation}

\begin{equation}
\tilde{{\mathbf{x}}}_\mathcal{S}=  h_\theta(\tilde{{\mathbf{z}}}_\mathcal{S}) 
\end{equation}
The optimization problem is Eq. \ref{obj} can be solved using gradient-descent based techniques on the decoder network $h_{\theta^{\ast}}$ $\big ( \theta^{\ast}$ are the parameters of the decoder network trained only on the source samples $\mathcal{S}_n \big )$  with respect to $\mathbf{z}$. This implies that given any input target image, the optimization problem in Eq. \ref{obj} will be solved to find its `nearest-clone' in the source distribution which is used as a proxy in the segmentation network trained only on $\mathcal{S}_n$. We call the iterative procedure of finding $\tilde{{\mathbf{x}}}_\mathcal{S}$  through optimization using $h_{\theta^{\ast}}$ as the Latent Search (LS). Finally, inspired by the observations made in \cite{hore2010image}, we propose to use structural similarity index (SSIM) \cite{wang2004image} based loss $\mathcal{L}_{ssim}$ for $\mathscr{D}$ to conduct the Latent Search. Unlike norm based losses, SSIM loss helps in preservation of structural information which is needed for segmentation. Fig. \ref{fig:LatentSearch} depicts the complete inference procedure employed in the proposed method named as the Generative Latent Search for Segmentation (GLSS).
\section{Implementation Details}
\subsection{Training}
Architectural details of the VAE used are shown in Fig. \ref{fig:architecture}. Sobel operator is used to extract the edge information of the input image which is concatenated with one of the layers of the Decoder via a \textit{tanh} non linearity as shown in Fig. \ref{fig:architecture}. The VAE is trained using (i) the Mean squared error reconstruction loss $\mathcal{L}_{r}$ and KL divergence $\mathds{D}_{KL}$ and (ii) the perceptual loss $\mathcal{L}_{p}$ for which the features are extracted from the $l^{\text{th}}$ layer (a hyper-parameter) of the DeepLabv3+ \cite{deeplabv3plus2018} (Xception backbone \cite{chollet2017xception}) and the UNet \cite{ronneberger2015u} (EfficientNet backbone \cite{tan2019efficientnet}) segmentation networks. The segmentation network ($S_{\psi}$ in Fig. \ref{fig:LatentSearch}) is either DeepLabv3+ or UNet  and is trained on the source dataset. For traning $S_{\psi}$, we use combination of binary cross-entropy ($\mathcal{L}_{bce}$) and dice coefficient loss ($\mathcal{L}_{dise}$) for UNet with RMSProp (lr = 0.001) as optimizer and  binary focal loss ($\mathcal{L}_{focal}$) \cite{lin2017focal} with $\gamma$ = 2.0, $\alpha$ = 0.75 and RMSProp (lr=0.01) as optimizer for DeepLabv3+. 
For the VAE , the hidden layers of Encoder and Decoder networks use Leaky ReLU and \textit{tanh} as activation functions with the dimensionality of the latent space being 64. VAE is trained using standard gradient descent procedure with RMSprop ($\alpha$=0.0001) as optimizer. We train VAE for 100 to 150 epochs with batchsize 64. 
\subsection{Inference}
Once the VAE is trained on the source dataset, given an image $\tilde{\mathbf{x}}_\mathcal{T}$ from the target distribution,
the Latent Search algorithm searches for an optimal latent vector $\tilde{\mathbf{z}}_\mathcal{S}$ that generates its `nearest-clone' $\tilde{\mathbf{x}}_\mathcal{S}$ from $\mathcal{P}_S$. The search is performed by minimizing the SSIM loss $ \mathcal{L}_{ssim}$ between the input target image $\tilde{\mathbf{x}}_\mathcal{T}$ and the VAE-reconstructed target image, using a gradient-descent based optimization procedure such as  ADAM \cite{kingma2014adam} with $\alpha=0.1$, $\beta_1=0.9$ and $\beta_2= 0.99$. The Latent Search is performed for $K$ (hyper-parameter) iterations over the latent space of the source for a given target image. Finally, the segmentation mask for the input target sample is assigned the same as the one given by the segmentation network $S_{\psi}$, which is trained on source data, on the `nearest-clone' $\tilde{\mathbf{x}}_\mathcal{S}$. 
Latent Search for one sample takes roughly 450 ms and 120 ms on SNV and Hand Gesture datasets respectively. Please refer supplementary material for more details.
\section{Experiment and Results}
\subsection{Datasets}
We consider the Red-channel of the COMPAQ dataset \cite{jones2002statistical} as our source data. It consists of 4675 RGB images with the corresponding annotations of the skin. 
Since there is no publicly available dataset with NIR images and corresponding skin segmentation labels, we create and use two NIR datasets (publicly available) as targets. The first one named as the Skin NIR Vision (SNV), consists of 800 images of multiple human subjects taken in different scenes, captured using a WANSVIEW 720P camera in the night-vision mode. The captured images cover wide range of scenarios for skin detection task like presence of multiple humans, backgrounds similar to skin color, different illuminations, saturation levels and different postures of subjects to ensure diversity.
Additionally, we made use of the publicly available multi-modal Hand Gesture dataset\footnote{\url{https://www.gti.ssr.upm.es/data/MultiModalHandGesture_dataset}} as another target dataset which we call as Hand Gesture dataset. This dataset covers 16 different hand-poses of multiple subjects. We randomly sampled  500 images in order to cover illumination changes and diversity in hand poses.
Both SNV and Hand Gesture datasets are manually annotated with precision. 
\subsection{Benchmarking on SNV and Hand Gesture datasets}
To begin with, we performed supervised segmentation experiments on both SNV and Hand Gesture datasets with 80-20 train-test split using SOTA segmentation algorithms.
\bgroup
\setlength{\tabcolsep}{18pt}
\begin{table}[h]
\caption{Benchmarking Skin NIR Vision (SNV) dataset and Hand Gesture dataset on standard segmentation architectures with 80-20 train-test split.}
\begin{center}
\scalebox{0.77}{
\begin{tabular}{lcccc}
    \toprule
        \multicolumn{1}{c}{} & \multicolumn{2}{c}{SNV} & \multicolumn{2}{c}{Hand Gesture}\\
        Method&IoU&Dice&IoU&Dice
        \\
    \midrule
    FPN \cite{lin2017feature} &0.792&0.895&0.902&0.950\\
    UNet \cite{ronneberger2015u}&0.798&0.890&0.903&0.950\\
    DeepLabv3+ \cite{deeplabv3plus2018} & 0.750 & 0.850 & 0.860 & 0.924\\
    Linknet \cite{chaurasia2017linknet} & 0.768 & 0.872 & 0.907 & 0.952\\
    PSPNet \cite{zhao2017pyramid} & 0.757 & 0.850 & 0.905 & 0.949\\
    \bottomrule
  \end{tabular}
  }
\end{center}
\label{Table:Datasetbench}
\end{table}
\egroup

Table \ref{Table:Datasetbench}  shows the standard performance metrics such as IoU and Dice-coefficient calculated using FPN \cite{lin2017feature}, UNet \cite{ronneberger2015u}, LinkNet \cite{chaurasia2017linknet}, PSPNet \cite{zhao2017pyramid}, all with EfficientNet \cite{tan2019efficientnet} as backbone and DeepLabv3+ \cite{deeplabv3plus2018} with Xception network \cite{chollet2017xception} as backbone.
It is seen that SNV dataset (IoU $\approx$ 0.79) is slightly complex as compared to Hand Gesture dataset (IoU $\approx$ 0.90).
\bgroup
\setlength{\tabcolsep}{4.4pt}
\begin{table}[h]
\caption{Empirical analysis of GLSS along with standard UDA methods. IoU and Dice-coefficient are computed for both SNV and Hand Gesture datasets using UNet  and DeepLabv3+ as segmentation networks.}
\begin{center}
\scalebox{0.78}{
\begin{tabular}{l|cccc|cccc}
    \toprule
    \multicolumn{1}{c|}{} & \multicolumn{4}{c|}{SNV} & \multicolumn{4}{c}{Hand Gesture}\\
     & \multicolumn{2}{c|}{UNet} & \multicolumn{2}{c|}{DeepLabv3+} & \multicolumn{2}{c|}{UNet} &\multicolumn{2}{c}{DeepLabv3+}\\
     Models &\multicolumn{1}{c|}{IoU} & \multicolumn{1}{c|}{Dice} & \multicolumn{1}{c|}{IoU} &\multicolumn{1}{c|}{Dice} & \multicolumn{1}{c|}{IoU} & \multicolumn{1}{c|}{Dice} & \multicolumn{1}{c|}{IoU} &\multicolumn{1}{c}{Dice}\\
    \midrule
    Source Only & 0.295 & 0.426 & 0.215 & 0.426 & 0.601 & 0.711 & 0.505 & 0.680\\
    AdaptSegnet \cite{tsai2018learning} & 0.315 & 0.435 & 0.230 & 0.435 &0.641&0.716& 0.542 & 0.736\\
    Advent \cite{vu2018advent} & 0.341 & 0.571 & 0.332 &0.540 &0.612 & 0.729 & 0.508 & 0.689\\
    CLAN \cite{luo2019taking} & 0.248& 0.442 & 0.225 &0.426 &0.625 & 0.732 & 0.513 & 0.692\\
    BDL \cite{li2019bidirectional} & 0.320 & 0.518 & 0.301 &0.509 &0.647 & 0.720 & 0.536 & 0.750\\
    DISE \cite{chang2019all} & 0.341  & 0.557  & 0.339  &0.532  &0.672  & 0.789  & 0.563 & 0.769 \\
    DADA \cite{vu2019dada} & 0.332  & 0.534  & 0.314  &0.521  &0.643  & 0.743  &0.559  & 0.761 \\
    ours (GLSS) & \textbf{0.406}  & \textbf{0.597}  & \textbf{0.385}  & \textbf{0.597}  & \textbf{0.736}  & \textbf{0.844}  & \textbf{0.698}  & \textbf{0.824} \\
    \bottomrule
\end{tabular}
}
\end{center}
\label{Table:2}
\end{table} 
\egroup
\subsection{Baseline UDA Experiments}
\subsubsection{SNV and Hand Gesture dataset:}
We have performed the UDA experiments with the SOTA UDA algorithms using Red-channel of the COMPAQ Dataset  \cite{jones2002statistical} as the source and SNV and Hand Gesture as the target.
Table \ref{Table:2} compares the performance of proposed GLSS algorithm with six SOTA baselines along with the Source Only case (without any UDA). We have used entire target dataset for IoU and Dice-coefficient evaluation. 
\begin{figure}[h]
\centering
    \includegraphics[width=1.0\textwidth,height=0.672\textwidth]{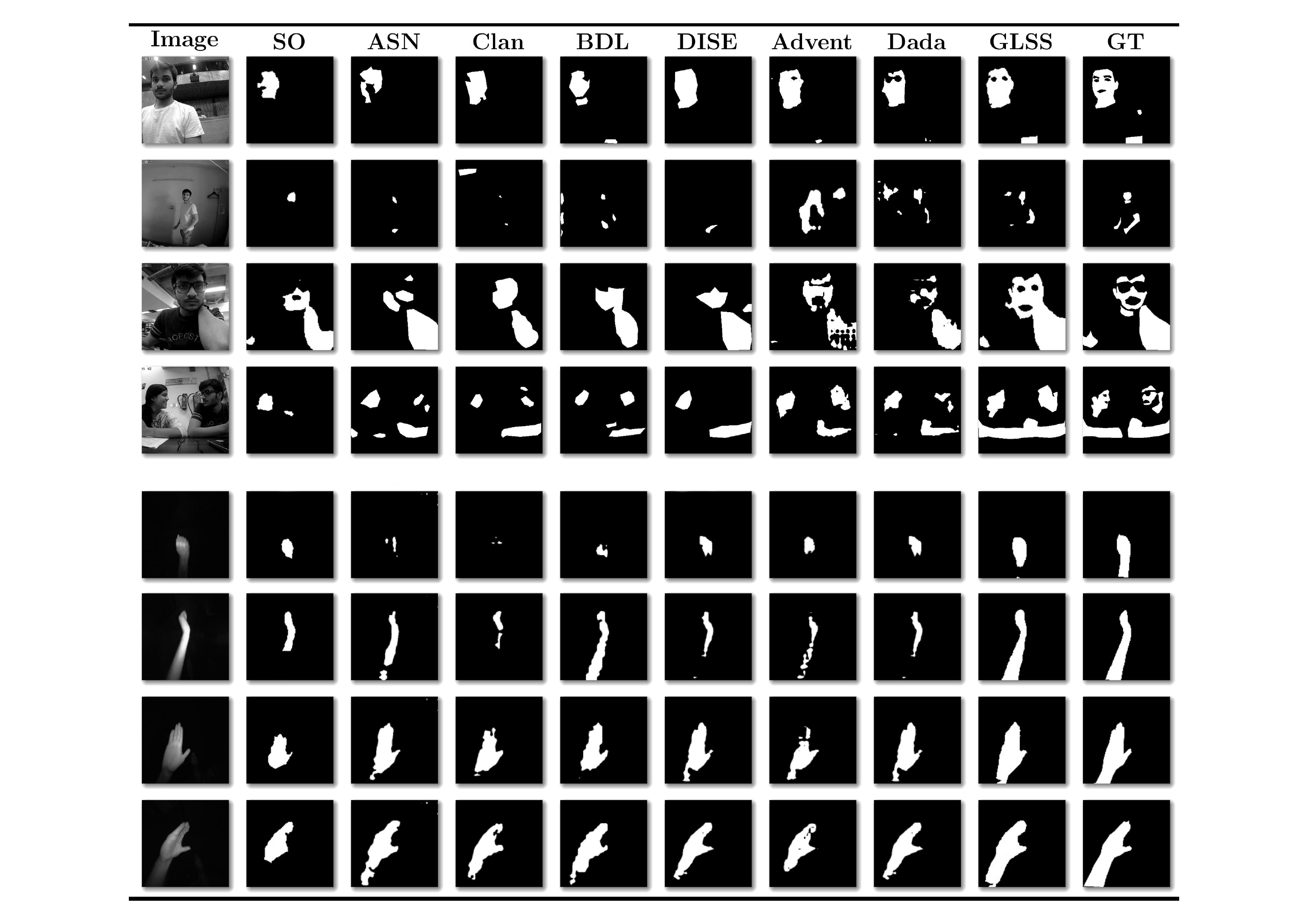}
    \caption{Qualitative comparison of predicted segmentation skin masks on SNV and Hand Gesture datasets with standard UDA methods. Top four rows shows skin masks for SNV dataset and the last four are the masks for Hand Gesture dataset. It is evident that GLSS predicted masks are very close to the GT masks as compared to other UDA methods. (SO=Source Only, ASN=AdaptSegNet \cite{tsai2018learning}, GT=Ground Truth).
    }
    \label{fig:gridmasks}
\end{figure}
 Two architectures, DeepLabv3+ and UNet, are employed for the segmentation network ($S_\psi$). It can be seen that although all the UDA SOTA methods improve upon the Source Only performance, GLSS offers significantly better performance despite not using any data from the target distribution. Hence, it may be empirically inferred that GLSS is successful in producing the `nearest-clone' through implicit sampling from the source distribution and thereby reducing the domain shift. It is also observed that the performance of the segmentation network $S_\psi$ does not degrade on the source data with GLSS. 
The predicted masks with DeepLabv3+ are shown in Fig. \ref{fig:gridmasks} for SNV and Hand Gesture datasets, respectively. It can be seen that GLSS is able to capture fine facial details like eyes, lips and body parts like hands, better as compared to SOTA methods. It is also seen that the predicted masks for Hand Gesture dataset are sharper in comparison to other methods. 
Most of the methods work with the assumption of spatial and structural similarity between the source and target data. Since our source and target datasets do not have similar backgrounds, the methods that make such assumptions perform poorer on our datasets. We observed that for methods like BDL, the image translation between NIR images and Red channel images is not effective for skin segmentation task.

\subsubsection{Standard UDA task:}
We use standard UDA methods along with GLSS on standard domain adaptation datasets such as Synthia \cite{ros2016synthia} and Cityscapes \cite{cordts2016cityscapes}. As observed from Table \ref{tab:synthiatocityscape}, even with large domain shift, GLSS finds a clone for every target image that is sampled from the source distribution while preserving the structure of the target image.
\begin{table}[h]
\caption{Empirical analysis of GLSS on standard domain adaptaion task of adapting  Synthia \cite{ros2016synthia} to Cityscapes \cite{cordts2016cityscapes}. We calculate the mean IoU for 13 classes (mIoU) and 16 classes (mIoU*).}
\begin{center}
\scalebox{0.775}{
\begin{tabular}{c|c|c}
    \toprule
    Models  &  mIoU & mIoU*   \\
    \midrule
    AdaptsegNet \cite{tsai2018learning} &46.7& -\\
    Advent \cite{vu2018advent} &48.0&41.2\\
    BDL \cite{li2019bidirectional} &51.4&-\\
    CLAN \cite{luo2019taking} &47.8&-\\
    DISE \cite{chang2019all} &48.8&41.5\\
    DADA \cite{vu2019dada} &49.8&42.6\\
    ours(GLSS) &\textbf{52.3}&\textbf{44.5}\\
\bottomrule
\end{tabular}
}
\end{center}
\label{tab:synthiatocityscape}
\end{table}
\subsection{Ablation Study}
We have conducted several ablation experiments on GLSS using both SNV and Hand Gesture datasets using DeepLabv3+ as segmentation networks ($S_\psi$) to ascertain the utility of different design choices we have made in our method. 
\begin{figure}[h]
\begin{center}
    \includegraphics[width=0.75\textwidth,height=.37\textwidth]{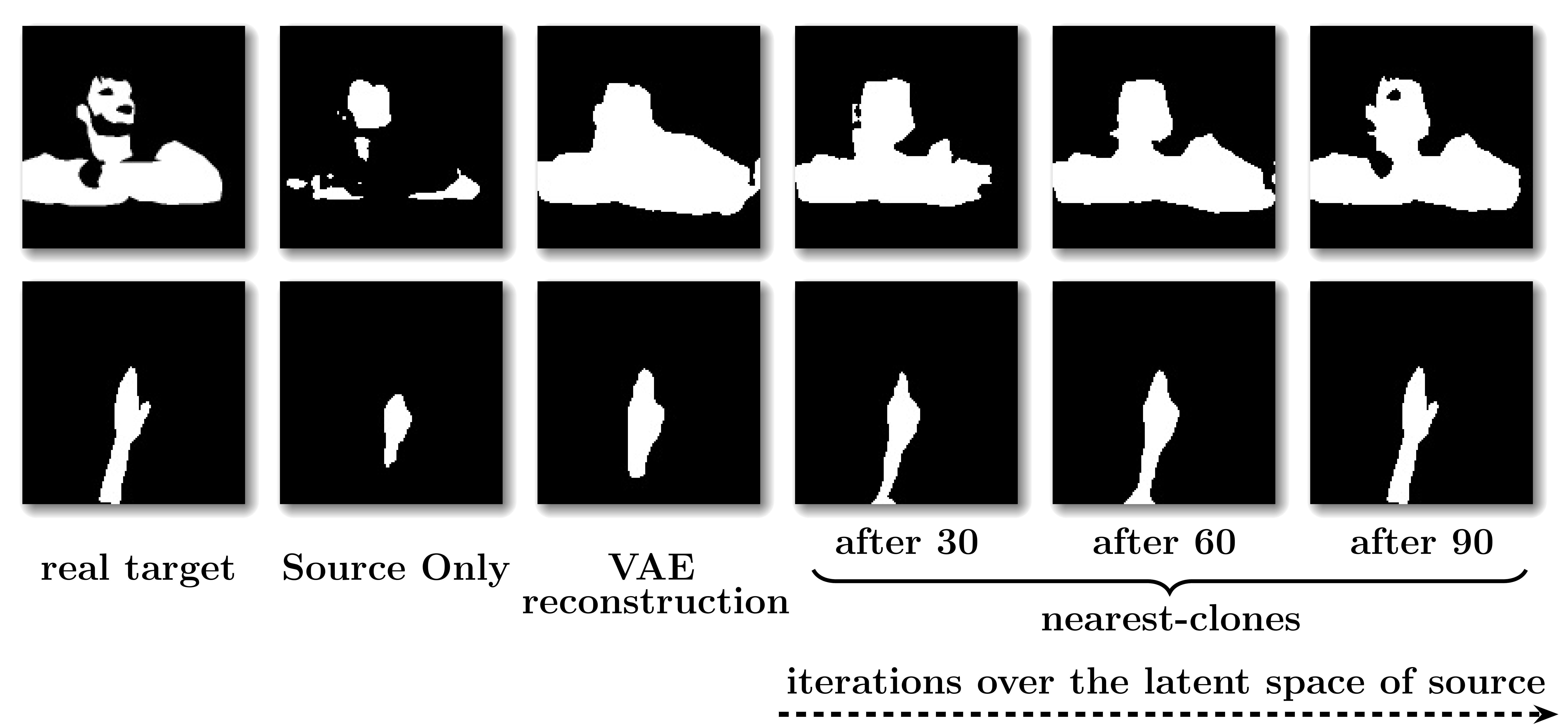}
    \caption{Illustration of Latent Search in GLSS. Real target is a ground truth mask. Source Only masks are obtained from target samples by training segmentation network $S_{\psi}$ on source dataset. Prior to the LS, skin masks are obtained from VAE reconstructed target samples. It is evident that predicted skin masks improve as the LS progresses. The predicted masks for the `nearest-clones' are shown after every 30 iterations.}
    \label{fig:LStransformMask}
\end{center}
\end{figure}
\begin{figure}[h]

      \subfloat[SNV]{\includegraphics[width=0.48\linewidth,height=0.29\linewidth]{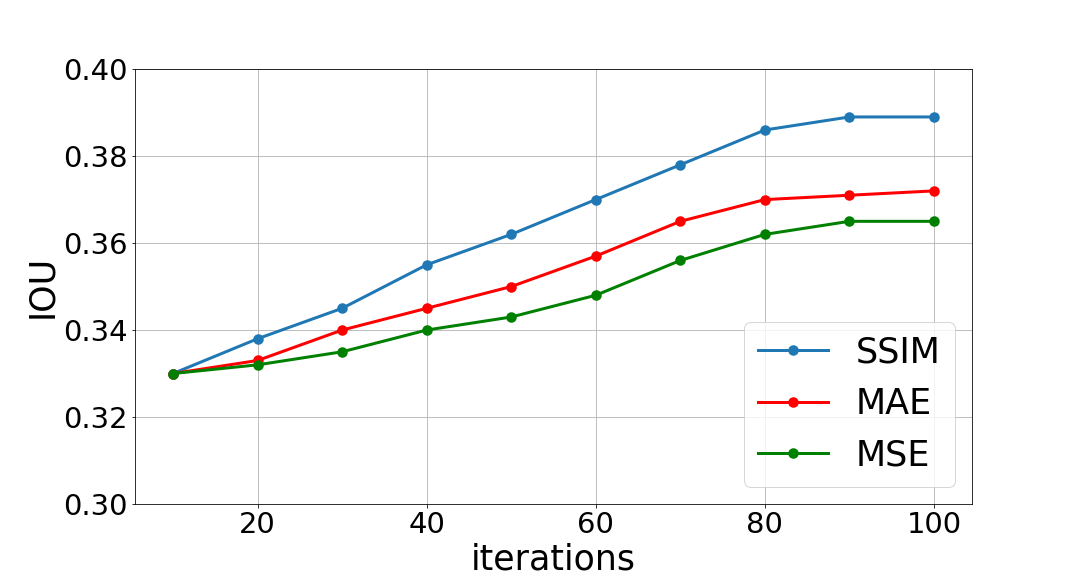}
        }
    \hfill
      \subfloat[Hand Gesture]{\includegraphics[width=0.48\linewidth,height=0.29\linewidth]{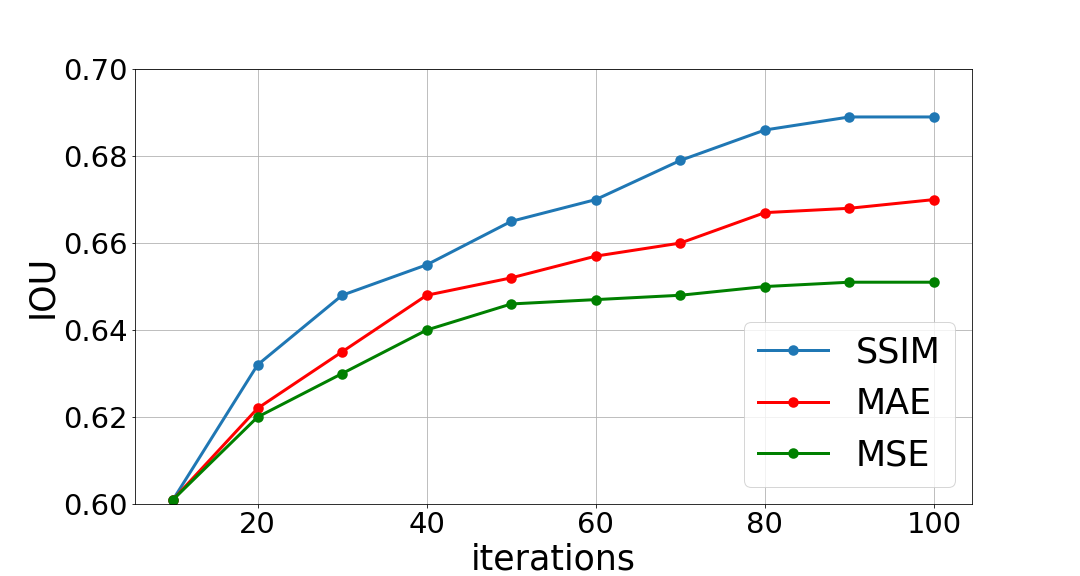}
      }
    \caption{Performance of gradient-based Latent Search during inference on target SNV and Hand Gesture images using different objective functions; MSE, MAE, SSIM loss. DeepLabv3+  is employed as segmentation network. It is evident that the losses saturate at around 90-100 iterations.}
    \label{fig:losa}
\end{figure}
\subsubsection{Effect of number of iterations on LS:}
The inference of GLSS involves a gradient-based optimization through the decoder network $h_{\theta^{\ast}}$ to generate the `nearest-clone' for a given target image. In Fig. \ref{fig:LStransformMask}, we show the skin masks of the transformed target images after every 30 iterations. 
It is seen that with the increasing number of iterations, the predicted skin masks improves using GLSS as the `nearest-clones' are optimized during the Latent Search procedure. 
We plot the IoU as a function of the number of iterations during Latent Search as shown in Fig. \ref{fig:losa} where it is seen that it saturates around 90-100 iterations that are used for all the UDA experiments described in the previous section. 
\subsubsection{Effect of Edge concatenation:}
As discussed earlier, edges extracted using Sobel filter on input images are concatenated with one of the layers of decoder for both training and inference.  It is seen from Table \ref{tab:ablation} that IoU improves for both the target datasets with concatenation of edges.
\setlength{\tabcolsep}{12.4pt}
\begin{table}[h]
\caption{Ablation of different components of GLSS during training and inference; Edge, perceptual loss $\mathcal{L}_{p}$ and Latent Search (LS).}
\begin{center}
\scalebox{0.77}{
\begin{tabular}{ccc|c|c}
    \toprule
    Edge & $\mathcal{L}_{p}$ & LS & SNV IoU & Hand Gesture IoU   \\
    \midrule
    &&&0.112& 0.227\\
    \checkmark&&&0.178&0.560\\
    &\checkmark&&0.120&0.250\\
    &&\checkmark&0.128&0.238\\
    \checkmark&\checkmark&&0.330&0.615\\
    &\checkmark&\checkmark&0.182&0.300\\
    \checkmark&&\checkmark&0.223&0.58\\
    \checkmark&\checkmark&\checkmark&0.385&0.698\\
\bottomrule
\end{tabular}
}
\end{center}
\label{tab:ablation}
\end{table} 
It is observed that without the edge concatenation, the generated images (`nearest-clones') are blurry thus the segmentation network fails to predict sharper skin masks.
\subsubsection{Effect of Perceptual loss $\mathcal{L}_{p}$:}
We have introduced a perceptual model $P_{\psi}$, trained on source samples. It ensures that the VAE reconstructed image is semantically similar to the input image unlike the norm based losses. Table \ref{tab:ablation} clearly demonstrates the improvement offered by the use of perceptual loss while training the VAE.
\subsubsection{Effect of SSIM for Latent Search:}
Finally, to validate the effect of SSIM loss for Latent Search, we plot the IoU metric using two norm based losses MSE (Mean squared error) and MAE (Mean absolute error) for the Latent Search procedure as shown in Fig. \ref{fig:losa}. On both the datasets, it is seen  that SSIM is consistently better than the norm based losses at all iterations affirming the superiority of the SSIM loss in preserving the structures while finding the `nearest-clone'.  

\section{Conclusion}
 In this paper, we addressed the problem of skin segmentation from NIR images. Owing to the non-existence of large-scale labelled NIR datasets for skin segmentation, the problem is casted as Unsupervised Domain Adaptation where we use the segmentation network trained on the Red-channel images from a large-scale labelled visible-spectrum dataset for UDA on NIR data. We propose a novel method for UDA without the need for the access to the target data (even unlabelled). Given a target image, we sample an image from the source distribution that is `closest' to  it under a distance metric. We show that such a `closest' sample exists and describe a procedure using an optimization algorithm over the latent space of a VAE trained on the source data. We demonstrate the utility of the proposed method along with the comparisons with SOTA UDA segmentation methods on the skin segmentation task on two NIR datasets that were created. Also, we reach SOTA performance on Synthia and Cityscapes datasets for semantic segmentation of urban scenes.
 \clearpage
\bibliographystyle{splncs04}
\bibliography{eccv2020submission}

\begin{thebibliography}{10}
\providecommand{\url}[1]{\texttt{#1}}
\providecommand{\urlprefix}{URL }
\providecommand{\doi}[1]{https://doi.org/#1}

\bibitem{Hoffman_cycada2017}
Cycada: Cycle consistent adversarial domain adaptation. In: International
  Conference on Machine Learning (ICML) (2018)

\bibitem{al2013impact}
Al-Mohair, H.K., Saleh, J., Saundi, S.: Impact of color space on human skin
  color detection using an intelligent system. In: 1st WSEAS international
  conference on image processing and pattern recognition (IPPR’13). vol.~2
  (2013)

\bibitem{brancati2017human}
Brancati, N., De~Pietro, G., Frucci, M., Gallo, L.: Human skin detection
  through correlation rules between the ycb and ycr subspaces based on dynamic
  color clustering. Computer Vision and Image Understanding  \textbf{155},
  33--42 (2017)

\bibitem{brock2018large}
Brock, A., Donahue, J., Simonyan, K.: Large scale gan training for high
  fidelity natural image synthesis. arXiv preprint arXiv:1809.11096  (2018)

\bibitem{chang2019all}
Chang, W.L., Wang, H.P., Peng, W.H., Chiu, W.C.: All about structure: Adapting
  structural information across domains for boosting semantic segmentation. In:
  Proceedings of the IEEE Conference on Computer Vision and Pattern
  Recognition. pp. 1900--1909 (2019)

\bibitem{chaurasia2017linknet}
Chaurasia, A., Culurciello, E.: Linknet: Exploiting encoder representations for
  efficient semantic segmentation. In: 2017 IEEE Visual Communications and
  Image Processing (VCIP). pp.~1--4. IEEE (2017)

\bibitem{deeplabv3plus2018}
Chen, L.C., Zhu, Y., Papandreou, G., Schroff, F., Adam, H.: Encoder-decoder
  with atrous separable convolution for semantic image segmentation. In: ECCV
  (2018)

\bibitem{chen2016skin}
Chen, W., Wang, K., Jiang, H., Li, M.: Skin color modeling for face detection
  and segmentation: a review and a new approach. Multimedia Tools and
  Applications  \textbf{75}(2),  839--862 (2016)

\bibitem{chen2007region}
Chen, W.C., Wang, M.S.: Region-based and content adaptive skin detection in
  color images. International journal of pattern recognition and artificial
  intelligence  \textbf{21}(05),  831--853 (2007)

\bibitem{chollet2017xception}
Chollet, F.: Xception: Deep learning with depthwise separable convolutions. In:
  Proceedings of the IEEE conference on computer vision and pattern
  recognition. pp. 1251--1258 (2017)

\bibitem{cordts2016cityscapes}
Cordts, M., Omran, M., Ramos, S., Rehfeld, T., Enzweiler, M., Benenson, R.,
  Franke, U., Roth, S., Schiele, B.: The cityscapes dataset for semantic urban
  scene understanding. In: Proceedings of the IEEE conference on computer
  vision and pattern recognition. pp. 3213--3223 (2016)

\bibitem{cover1967nearest}
Cover, T., Hart, P.: Nearest neighbor pattern classification. IEEE transactions
  on information theory  \textbf{13}(1),  21--27 (1967)

\bibitem{dourado2019domain}
Dourado, A., Guth, F., de~Campos, T.E., Weigang, L.: Domain adaptation for
  holistic skin detection. arXiv preprint arXiv:1903.06969  (2019)

\bibitem{erdem2011combining}
Erdem, C., Ulukaya, S., Karaali, A., Erdem, A.T.: Combining haar feature and
  skin color based classifiers for face detection. In: 2011 IEEE International
  Conference on Acoustics, Speech and Signal Processing (ICASSP). pp.
  1497--1500. IEEE (2011)

\bibitem{he2016deep}
He, K., Zhang, X., Ren, S., Sun, J.: Deep residual learning for image
  recognition. In: Proceedings of the IEEE conference on computer vision and
  pattern recognition. pp. 770--778 (2016)

\bibitem{he2019semi}
He, Y., Shi, J., Wang, C., Huang, H., Liu, J., Li, G., Liu, R., Wang, J.:
  Semi-supervised skin detection by network with mutual guidance. In:
  Proceedings of the IEEE International Conference on Computer Vision. pp.
  2111--2120 (2019)

\bibitem{hoffman2016fcns}
Hoffman, J., Wang, D., Yu, F., Darrell, T.: Fcns in the wild: Pixel-level
  adversarial and constraint-based adaptation. arXiv preprint arXiv:1612.02649
  (2016)

\bibitem{hore2010image}
Hore, A., Ziou, D.: Image quality metrics: Psnr vs. ssim. In: 2010 20th
  International Conference on Pattern Recognition. pp. 2366--2369. IEEE (2010)

\bibitem{hsu2002face}
Hsu, R.L., Abdel-Mottaleb, M., Jain, A.K.: Face detection in color images. IEEE
  transactions on pattern analysis and machine intelligence  \textbf{24}(5),
  696--706 (2002)

\bibitem{huynh2002skin}
Huynh-Thu, Q., Meguro, M., Kaneko, M.: Skin-color-based image segmentation and
  its application in face detection. In: MVA. pp. 48--51 (2002)

\bibitem{johnson2016perceptual}
Johnson, J., Alahi, A., Fei-Fei, L.: Perceptual losses for real-time style
  transfer and super-resolution. In: European conference on computer vision.
  pp. 694--711. Springer (2016)

\bibitem{jones2002statistical}
Jones, M.J., Rehg, J.M.: Statistical color models with application to skin
  detection. International Journal of Computer Vision  \textbf{46}(1),  81--96
  (2002)

\bibitem{kingma2014adam}
Kingma, D.P., Ba, J.: Adam: A method for stochastic optimization. arXiv
  preprint arXiv:1412.6980  (2014)

\bibitem{kingma2013auto}
Kingma, D.P., Welling, M.: Auto-encoding variational bayes. arXiv preprint
  arXiv:1312.6114  (2013)

\bibitem{kong2005recent}
Kong, S.G., Heo, J., Abidi, B.R., Paik, J., Abidi, M.A.: Recent advances in
  visual and infrared face recognition—a review. Computer Vision and Image
  Understanding  \textbf{97}(1),  103--135 (2005)

\bibitem{kovac2003human}
Kovac, J., Peer, P., Solina, F.: Human skin color clustering for face
  detection, vol.~2. IEEE (2003)

\bibitem{li2019bidirectional}
Li, Y., Yuan, L., Vasconcelos, N.: Bidirectional learning for domain adaptation
  of semantic segmentation. arXiv preprint arXiv:1904.10620  (2019)

\bibitem{lin2017feature}
Lin, T.Y., Doll{\'a}r, P., Girshick, R., He, K., Hariharan, B., Belongie, S.:
  Feature pyramid networks for object detection. In: Proceedings of the IEEE
  conference on computer vision and pattern recognition. pp. 2117--2125 (2017)

\bibitem{lin2017focal}
Lin, T.Y., Goyal, P., Girshick, R., He, K., Doll{\'a}r, P.: Focal loss for
  dense object detection. In: Proceedings of the IEEE international conference
  on computer vision. pp. 2980--2988 (2017)

\bibitem{liu2010robust}
Liu, Q., Peng, G.z.: A robust skin color based face detection algorithm. In:
  2010 2nd International Asia Conference on Informatics in Control, Automation
  and Robotics (CAR 2010). vol.~2, pp. 525--528. IEEE (2010)

\bibitem{long2015fully}
Long, J., Shelhamer, E., Darrell, T.: Fully convolutional networks for semantic
  segmentation. In: Proceedings of the IEEE conference on computer vision and
  pattern recognition. pp. 3431--3440 (2015)

\bibitem{luo2019taking}
Luo, Y., Zheng, L., Guan, T., Yu, J., Yang, Y.: Taking a closer look at domain
  shift: Category-level adversaries for semantics consistent domain adaptation.
  In: Proceedings of the IEEE Conference on Computer Vision and Pattern
  Recognition. pp. 2507--2516 (2019)

\bibitem{mahmoodi2017high}
Mahmoodi, M.R.: High performance novel skin segmentation algorithm for images
  with complex background. arXiv preprint arXiv:1701.05588  (2017)

\bibitem{mahmoodi2016comprehensive}
Mahmoodi, M.R., Sayedi, S.M.: A comprehensive survey on human skin detection.
  International Journal of Image, Graphics \& Signal Processing  \textbf{8}(5)
  (2016)

\bibitem{moallem2011novel}
Moallem, P., Mousavi, B.S., Monadjemi, S.A.: A novel fuzzy rule base system for
  pose independent faces detection. Applied Soft Computing  \textbf{11}(2),
  1801--1810 (2011)

\bibitem{pan2003face}
Pan, Z., Healey, G., Prasad, M., Tromberg, B.: Face recognition in
  hyperspectral images. IEEE Transactions on Pattern Analysis and Machine
  Intelligence  \textbf{25}(12),  1552--1560 (2003)

\bibitem{pandey2020target}
Pandey, P., AP, P., Kyatham, V., Mishra, D., Dastidar, T.R.: Target-independent
  domain adaptation for wbc classification using generative latent search.
  arXiv preprint arXiv:2005.05432  (2020)

\bibitem{pandey2020guided}
Pandey, P., Prathosh, A., Kohli, M., Pritchard, J.: Guided weak supervision for
  action recognition with scarce data to assess skills of children with autism.
  In: Proceedings of the AAAI Conference on Artificial Intelligence. vol.~34,
  pp. 463--470 (2020)

\bibitem{prathosh2017estimation}
Prathosh, A., Praveena, P., Mestha, L.K., Bharadwaj, S.: Estimation of
  respiratory pattern from video using selective ensemble aggregation. IEEE
  Transactions on Signal Processing  \textbf{65}(11),  2902--2916 (2017)

\bibitem{qiang2010robust}
Qiang-rong, J., Hua-lan, L.: Robust human face detection in complicated color
  images. In: 2010 2nd IEEE International Conference on Information Management
  and Engineering. pp. 218--221. IEEE (2010)

\bibitem{ronneberger2015u}
Ronneberger, O., Fischer, P., Brox, T.: U-net: Convolutional networks for
  biomedical image segmentation. In: International Conference on Medical image
  computing and computer-assisted intervention. pp. 234--241. Springer (2015)

\bibitem{ros2016synthia}
Ros, G., Sellart, L., Materzynska, J., Vazquez, D., Lopez, A.M.: The synthia
  dataset: A large collection of synthetic images for semantic segmentation of
  urban scenes. In: Proceedings of the IEEE conference on computer vision and
  pattern recognition. pp. 3234--3243 (2016)

\bibitem{seow2003neural}
Seow, M.J., Valaparla, D., Asari, V.K.: Neural network based skin color model
  for face detection. In: 32nd Applied Imagery Pattern Recognition Workshop,
  2003. Proceedings. pp. 141--145. IEEE (2003)

\bibitem{szegedy2016rethinking}
Szegedy, C., Vanhoucke, V., Ioffe, S., Shlens, J., Wojna, Z.: Rethinking the
  inception architecture for computer vision. In: Proceedings of the IEEE
  conference on computer vision and pattern recognition. pp. 2818--2826 (2016)

\bibitem{tan2019efficientnet}
Tan, M., Le, Q.V.: Efficientnet: Rethinking model scaling for convolutional
  neural networks. arXiv preprint arXiv:1905.11946  (2019)

\bibitem{taqa2010increasing}
Taqa, A.Y., Jalab, H.A.: Increasing the reliability of skin detectors.
  Scientific Research and Essays  \textbf{5}(17),  2480--2490 (2010)

\bibitem{tsai2018learning}
Tsai, Y.H., Hung, W.C., Schulter, S., Sohn, K., Yang, M.H., Chandraker, M.:
  Learning to adapt structured output space for semantic segmentation. In:
  Proceedings of the IEEE Conference on Computer Vision and Pattern
  Recognition. pp. 7472--7481 (2018)

\bibitem{vu2018advent}
Vu, T.H., Jain, H., Bucher, M., Cord, M., P{\'e}rez, P.: Advent: Adversarial
  entropy minimization for domain adaptation in semantic segmentation. In: CVPR
  (2019)

\bibitem{vu2019dada}
Vu, T.H., Jain, H., Bucher, M., Cord, M., P{\'e}rez, P.: Dada: Depth-aware
  domain adaptation in semantic segmentation. In: Proceedings of the IEEE
  International Conference on Computer Vision. pp. 7364--7373 (2019)

\bibitem{wang2004image}
Wang, Z., Bovik, A.C., Sheikh, H.R., Simoncelli, E.P.: Image quality
  assessment: from error visibility to structural similarity. IEEE transactions
  on image processing  \textbf{13}(4),  600--612 (2004)

\bibitem{wu2012skin}
Wu, Q., Cai, R., Fan, L., Ruan, C., Leng, G.: Skin detection using color
  processing mechanism inspired by the visual system  (2012)

\bibitem{zaidan2014multi}
Zaidan, A., Ahmad, N.N., Karim, H.A., Larbani, M., Zaidan, B., Sali, A.: On the
  multi-agent learning neural and bayesian methods in skin detector and
  pornography classifier: An automated anti-pornography system. Neurocomputing
  \textbf{131},  397--418 (2014)

\bibitem{zhao2017pyramid}
Zhao, H., Shi, J., Qi, X., Wang, X., Jia, J.: Pyramid scene parsing network.
  In: Proceedings of the IEEE conference on computer vision and pattern
  recognition. pp. 2881--2890 (2017)

\end{thebibliography}

\makeatletter
\newcounter{phase}[algorithm]
\newlength{\phaserulewidth}
\newcommand{\setphaserulewidth}{\setlength{\phaserulewidth}}
\newcommand{\phase}[1]{%
  \vspace{-2.2ex}
  \Statex\leavevmode\llap{\rule{\dimexpr\labelwidth+\labelsep}{\phaserulewidth}}\rule{\linewidth}{\phaserulewidth}
  \Statex\strut\refstepcounter{phase}\textbf{#1}
  \vspace{-2.2ex}\Statex\leavevmode\llap{\rule{\dimexpr\labelwidth+\labelsep}{\phaserulewidth}}\rule{\linewidth}{\phaserulewidth}}
\makeatother
\setphaserulewidth{.35pt}

\title{Unsupervised Domain Adaptation for Semantic Segmentation of NIR Images through Generative Latent Search
\\
$-$Supplementary$-$} 


\titlerunning{UDA for Semantic Segmentation of NIR Images through GLS}
%
\author{Prashant Pandey\thanks{equal contribution}\orcidID{0000-0002-6594-9685} \and
Aayush Kumar Tyagi\index{Tyagi, Aayush Kumar}\printfnsymbol{1}\orcidID{0000-0002-3615-7283} \and
Sameer Ambekar\orcidID{0000-0002-8650-3180} \and \\ Prathosh AP\orcidID{0000-0002-8699-5760}}
%
\authorrunning{P. Pandey et al.}
%
\institute{Indian Institute of Technology Delhi \\
\email{getprashant57@gmail.com, aayush16081@iiitd.ac.in, ambekarsameer@gmail.com, prathoshap@iitd.ac.in}}

\maketitle
\section{Datasets}
\begin{figure}[h]
\begin{center}
    \includegraphics[width=0.6\textwidth,height=.29\textwidth]{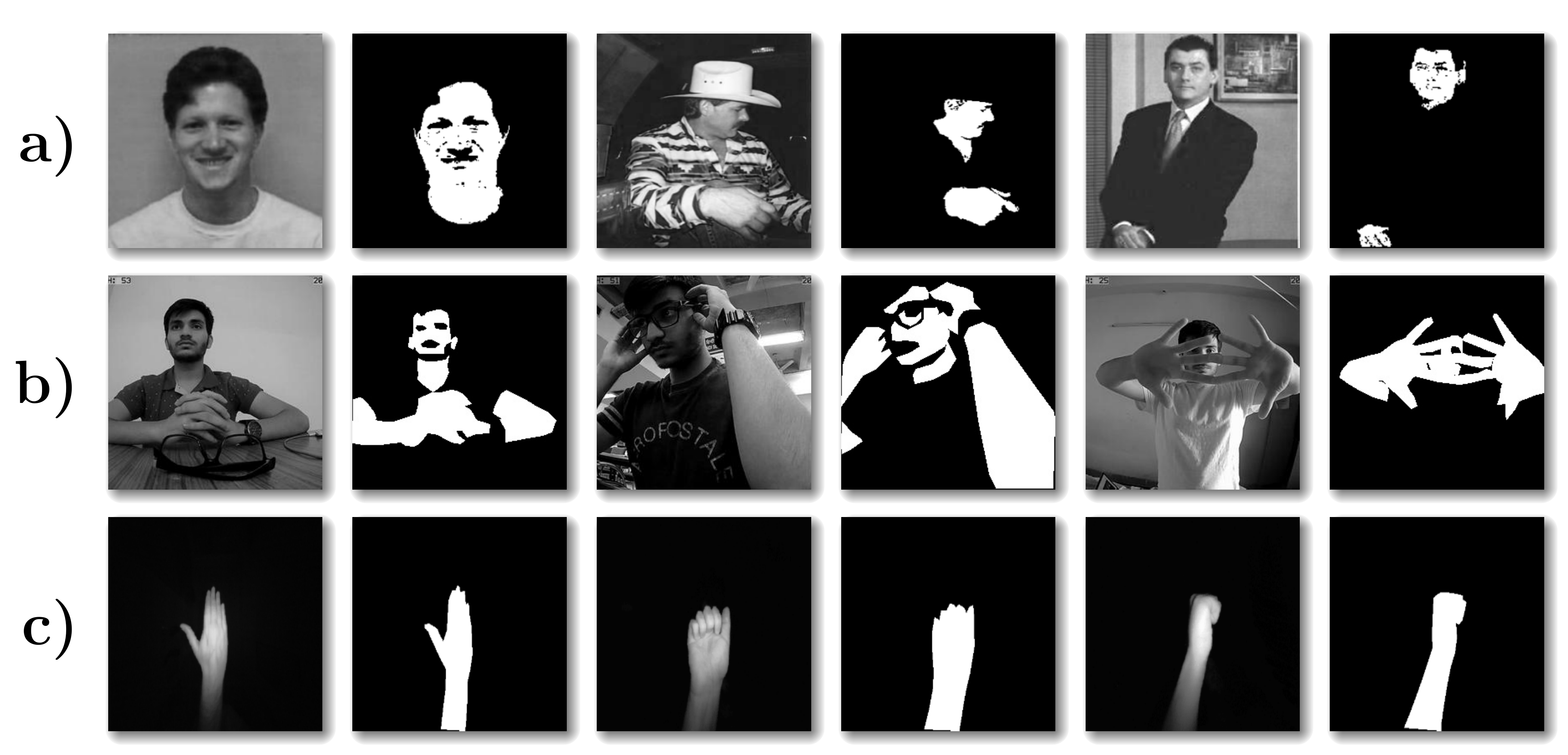}
    \caption{a) shows samples of COMPAQ dataset \cite{jones2002statistical} images with only Red-channel present b) contains samples from SNV dataset c) contains samples from Hand Gesture dataset.
    }
    \label{fig:DatasetExample}
\end{center}
\end{figure}
Each row of Fig. \ref{fig:DatasetExample} shows few images with the corresponding skin-mask pairs from  COMPAQ, SNV and Hand Gesture datasets respectively.
\begin{algorithm}[bth!]
    \caption{\textbf{Generative Latent Search for Segmentation (GLSS)}}
    \label{alg:trainalgo}
    
    \hspace*{\algorithmicindent} 
    
    \begin{algorithmic}[1] 
    \vspace*{.1cm}
    \phase{{Training VAE on source samples}}
    \textbf{Input}: Source dataset $\mathcal{S}_{n}= \{\rvx_{1},..., \rvx_{n} \}$, Number of source samples $n$, Encoder $g_{\phi}$, Decoder $h_{\theta}$, Trained Perceptual Model ${P}_{\psi}$, Learning rate $\eta$, Batchsize $B$. \textbf{Output}: Optimal parameters $\phi^*$, $\theta^*$.
    \State Initialize parameters $\phi$, $\theta$
    \REPEAT
    \State sample batch $\{\rvx_{i}\}$ from dataset $\mathcal{S}_{n},$ for $ i = 1,..., B$  
    \State $\mu_\rvz^{(i)}, \sigma_\rvz^{(i)} \gets g_{\phi} ( \rvx_{i})$
    \State sample ${\rvz}_{i} \sim \mathcal{N}(\mu_\rvz^{(i)},$ ${\sigma_\rvz^{(i)}}^2)$
    \vspace*{0.02cm}
    \State $ \mathcal{L}_{r} \gets \sum_{i=1}^{B} \left\Vert \rvx_{i} - h_{\theta}({\rvz}_{i})\right\Vert_2^2$ 
    \vspace*{0.06cm}
    \State $ \mathcal{L}_{p} \gets \sum_{i=1}^{B} \left\Vert P_{\psi}( \rvx_{i}) - P_{\psi}(h_{\theta}({\rvz}_{i}))\right\Vert_2^2 $
    \vspace*{0.02cm}
    \State $ \mathcal{L}_{g}\gets\mathcal{L}_{r} + \mathcal{L}_{p} + \sum_{i=1}^{B} \mathds{D}_{KL} \left[\mathcal{N}(\mu_\rvz^{(i)},{\sigma_\rvz^{(i)}}^2 ) \,|| \,\mathcal{N}(0, 1) \right]$
    \State $ \mathcal{L}_{h} \gets  \mathcal{L}_{r} + \mathcal{L}_{p}$ 
    \State $ \phi \gets \phi + \eta \nabla_{\phi}\mathcal{L}_{g} $ 
    \State $ \theta \gets \theta + \eta \nabla_{\theta}\mathcal{L}_{h}$ 
    \UNTIL{convergence of $\phi$, $\theta$ }
    \vspace*{.35cm}
    \phase{Inference - Latent Search during testing with Target}
     \textbf{Input}: Target sample $\rm \tilde{\rvx}_\mathcal{T}$, Trained decoder $h_{\theta^{\ast}}$, Learning rate $\eta$. \textbf{Output}: `nearest-clone' $\tilde{{\mathbf{x}}}_\mathcal{S}$  for the target sample $\rm \tilde{\rvx}_\mathcal{T}$.  
    \State sample ${\mathbf{z}}$ from $\mathcal{N}(0,1)$
    \REPEAT
    
    \State $ \mathcal{L}_{ssim} \gets 1 - \text{SSIM}({\rm \tilde{x}}_\mathcal{T}, h_{\theta^*}({\rvz})) $
    \State $ {\rvz} \gets {\rvz}  + \eta \nabla_{\rvz}\mathcal{L}_{ssim} $
    \UNTIL{convergence of $\mathcal{L}_{ssim}$}
    \State $ \rm \tilde{\rvz}_\mathcal{S} \gets {\rvz}$ \State $\tilde{\rvx}_\mathcal{S} \gets h_{\theta^*} (\tilde{\mathbf{z}}_\mathcal{S})$
    \end{algorithmic}
\end{algorithm}
\section{Implementation details}
\subsection{GLSS on NIR images}
$\mathcal{S}_{\psi}$ is the segmentation model (as shown in Fig. 2 in the paper) implemented using DeepLabv3+ (XceptionNet) and UNet (EfficientNet). $\mathcal{S}_{\psi}$ is trained for 100-150 epochs with losses ($\mathcal{L}_{s}$) as shown in Eq. 1 and Eq. 2  for UNet  and DeepLabv3+ respectively.

\begin{equation}
    \mathcal{L}_{s} =  \mathcal{L}_{dice} + \mathcal{L}_{bce}
\end{equation}
\begin{equation}
    \mathcal{L}_{s} = \mathcal{L}_{focal}
\end{equation}

$\mathcal{L}_{dice}$ is the dice coefficient loss which calculates the overlap between the predicted and the ground truth mask whereas $\mathcal{L}_{bce}$ is the binary cross-entropy loss.
Binary focal loss ($\mathcal{L}_{focal}$) tries to down-weight the contribution
of examples that can be easily segmented so that the segmentation model focuses  more on learning hard examples.

$P_{\psi}$ is a perceptual model (as shown in Fig. 1 in the paper) that uses perceptual loss $\mathcal{L}_{p}$. The perceptual features are taken from the 6th layer of UNet and the last concatenation layer of DeepLabv3+. VAE along with perceptual loss $\mathcal{L}_{p}$ is trained for  150-200 epochs. $\mathcal{L}_{p}$ is weighted  with a factor $\beta$ (a hyper-parameter) as shown:
\begin{equation}
    \mathcal{L}_{total}  =  \mathcal{L}_{vae} + \beta \mathcal{L}_{p} 
\end{equation}
In order to improve the quality of VAE reconstructed images, we weighted the perceptual loss ($\mathcal{L}_{p}$) with different values of $\beta$.
For UNet, we have used $\beta$ = 2 whereas  $\beta$ = 3 is used for DeepLabv3+.
The first part of Algorithm 1 shows the steps involved in training VAE and second part shows the steps involved in inference procedure. 

Using an Intel Xeon processor (6 Cores) with a base frequency of 2.0 GHz, 32GB RAM and NVIDIA® Tesla® K40 (12 GB Memory) GPU, Latent Search for one sample on SNV dataset takes 450 ms and 120 ms on Hand Gesture dataset. The time required is in the order of milliseconds on a basic GPU like K40 which is not very significant. However, this is the cost that is to be paid for being target independent which is a very significant advantage.
\subsection{Implementation details of UDA baseline methods for skin segmentation}
DeepLabv3+ was used as the segmentation model for all the baselines with images and corresponding masks of size $128\times128$. AdaptsegNet \cite{tsai2018learning} uses discriminative approach to predict the domain of the images.
For discriminator, we used a model with 5 convolutional layers (default implementation). We performed a grid search over  $\mathcal{\lambda}_{advtarget1}$ and $\mathcal{\lambda}_{advtarget2}$ and reported the best IoU score for AdaptsegNet.
DISE \cite{chang2019all} uses image-to-image translation approach to translate one domain to another. It employs label transfer loss to optimize the segmentation model. Image-to-image translation based methods work well in cases where the structural similarity is more. 
We used 0.1, 0.25 and 0.5 for $\mathcal{\lambda}_{seg}$ and reported the best IoU using $\mathcal{\lambda}_{seg}=0.1$ while the learning rate was set to 2.5e-4.
Advent \cite{vu2018advent} proposes to leverage an entropy loss to directly penalize low-confident predictions on target domain. If $\mathcal{\lambda}_{ent}$ is large then the entropy drops too quickly and the model is strongly biased towards a few classes. We used 0.001 for $\mathcal{\lambda}_{ent}$ as suggested by the authors regardless of the network and dataset. Also, for adversarial training, 0.001 was used for $\mathcal{\lambda}_{adv}$. We trained with AdvEnt as it performed better that minEnt as stated in the paper. SGD and Adam were used as optimizers for segmentation and discriminator networks respectively.
In DADA \cite{vu2019dada}, authors make use of an additional depth information in the source domain. We performed a grid search over $\mathcal{\lambda}_{seg}$ using values 0.25, 0.5, 1. The learning rate was varied with values 2.5e-4, 1e-4 and 3e-4 and finally best IoU was reported with $\mathcal{\lambda}_{seg}=0.5$ and learning rate = 2.5e-4. 
CLAN \cite{luo2019taking} makes use of a category-level joint distribution and align each class with an adaptive adversarial loss, thus ensuring correct mapping of source and target. Compared to traditional adversarial training, CLAN introduces the discrepancy loss and the category-level adversarial loss. Hyperparameters like learning rate, weight decay, $\mathcal{\lambda}_{weight}$ and $\mathcal{\lambda}_{adv}$ were used with values 2.5e-4, 5e-4, 0.01 and 0.001 respectively during training. 
For training BDL \cite{li2019bidirectional}, we set the learning rate to 2.5e-4 for the segmentation network and 1e-4 for the discriminator. Grid search was performed for $\mathcal{\lambda}_{advtarget}$ with values 1e-3, 2e-3, 5e-3 and best IoU was reported with $\mathcal{\lambda}_{advtarget}=$ 1e-3.

\subsection{SSIM Loss}
SSIM loss compares pixels and their corresponding neighbourhoods between two images, preserving the luminance, contrast and structural information. To perform Latent Search, we used distance metric as SSIM loss, that helps to sample the ‘nearest-clone’ in the source distribution for the target image from the generative latent space of VAE. Unlike norm-based losses, SSIM loss helps in the preservation of structural information as compared to discrete pixel-level information. We used 11x11 Gaussian filter in our experiments.

SSIM  is defined using the three aspects of similarities, luminance $\big(l(\mathbf{x}, \hat{\mathbf{x}})\big)$, contrast $\big(c(\mathbf{x}, \hat{\mathbf{x}})\big)$ and structure $\big(s(\mathbf{x}, \hat{\mathbf{x}})\big)$ that are measured for a pair of images $\{\mathbf{x}, \hat{\mathbf{x}}\}$ as follows:
\begin{equation}
l(\mathbf{x}, \hat{\mathbf{x}}) = \frac{2\mu_\mathbf{x}\mu_{\hat{\mathbf{x}}}+C_1}{\mu_\mathbf{x}^2+ \mu_{\hat{\mathbf{x}}}^2 + C_1}
\end{equation}
\begin{equation}
c(\mathbf{x}, \hat{\mathbf{x}}) = \frac{2\sigma_{\mathbf{x}}\sigma_{\hat{\mathbf{x}}}+C_2}{{\sigma_\mathbf{x}}^2+ {\sigma_{\hat{\mathbf{x}}}}^2 + C_2}
\end{equation}
\begin{equation}
s(\mathbf{x}, \hat{\mathbf{x}}) = \frac{\sigma_{\mathbf{x}{\hat{\mathbf{x}}}}+C_3}{\sigma_{\mathbf{x}}\sigma_{\hat{\mathbf{x}}} + C_3}
\end{equation}
where $\mu$'s denote sample means and $\sigma$'s denote variances. $C_1, C_2$ and $C_3$ are constants. With these, SSIM and the corresponding loss function $\mathcal{L}_{ssim}$, for a pair of images $\{\mathbf{x}, \hat{\mathbf{x}}\}$ are defined as: 
\begin{equation}
\text{SSIM}(\mathbf{x}, \hat{\mathbf{x}}) = l(\mathbf{x}, \hat{\mathbf{x}})^{\alpha} \cdot c(\mathbf{x}, \hat{\mathbf{x}})^{\beta} \cdot s(\mathbf{x}, \hat{\mathbf{x}})^{\gamma}  
\end{equation}
where $\alpha>0$, $\beta>0$ and $\gamma>0$ are parameters used to adjust the relative importance of the three components.
\begin{equation}
\mathcal{L}_{ssim}(\mathbf{x}, \hat{\mathbf{x}}) = 1 - \text{SSIM}(\mathbf{x}, \hat{\mathbf{x}})
\end{equation}

\begin{table}[h]
\caption{IoU comparison for Target-Independence of GLSS with change in the amount of target data. GLSS performance is not affected by change in the amount of target data during training while other SOTA methods degrade.}
\begin{center}
\scalebox{0.77}{
\begin{tabular}{c|c|c|c|c|c|c}
    \toprule
    \% of Target data  &  Adaptsegnet & BDL & CLAN& Advent & DADA & GLSS \\
    \midrule
    
    60 & 0.23 & 0.30 & 0.22 & 0.33 & 0.31 & 0.37 \\ 
    40 & 0.22 & 0.26 & 0.22 & 0.29 & 0.28 & 0.37 \\ 
    20 & 0.21 & 0.22 & 0.21 & 0.24 & 0.23 & 0.38 \\ 
    
\bottomrule
\end{tabular}
}
\end{center}
\label{tab:vaevsgans}
\end{table} 

\subsection{Target-Independence of GLSS}
GLSS is a general-purpose target-independent UDA method. For UDA, target independence is a merit since a SINGLE source model can be used across multiple targets. However, even with target data (for VAE training) GLSS doesn’t degrade while SOTA methods do, for skin segmentation on NIR images (Table below compares IoU). 

\subsection{GAN vs. VAE}
GLSS demands a generative model that has both generation and inference capabilities (mapping from latent to data space and vice versa), which is not the case with GANs. This leads to non-convergence of latent search. To validate this, we trained a SOTA BigGAN \cite{brock2018large} on COMPAQ Dataset [21] and performed GLSS. Although GAN had better generation quality (FID of 29.7 with BigGAN vs. 44 with VAE), the final IoU was worse as shown in Table \ref{tab:vaevsgans}.

\begin{table}[h]
\caption{IoU score comparison between BigGAN and VAE when trained on SNV and Hand Gesture datasets. VAE scores better in terms of IoU.}
\begin{center}
\scalebox{0.77}{
\begin{tabular}{c|c|c|c}
    \toprule
    SNV/BigGAN  &  Hand Gesture/BigGAN & SNV/VAE & Hand Gesture/VAE  \\
    \midrule
    
    0.09&0.21&0.38&0.69\\
\bottomrule
\end{tabular}
}
\end{center}
\label{tab:vaevsgans}
\end{table} 

\clearpage
\section{Additional Results}

\begin{figure}
\begin{center}
    \includegraphics[width=0.7\textwidth,height=.74\textwidth]{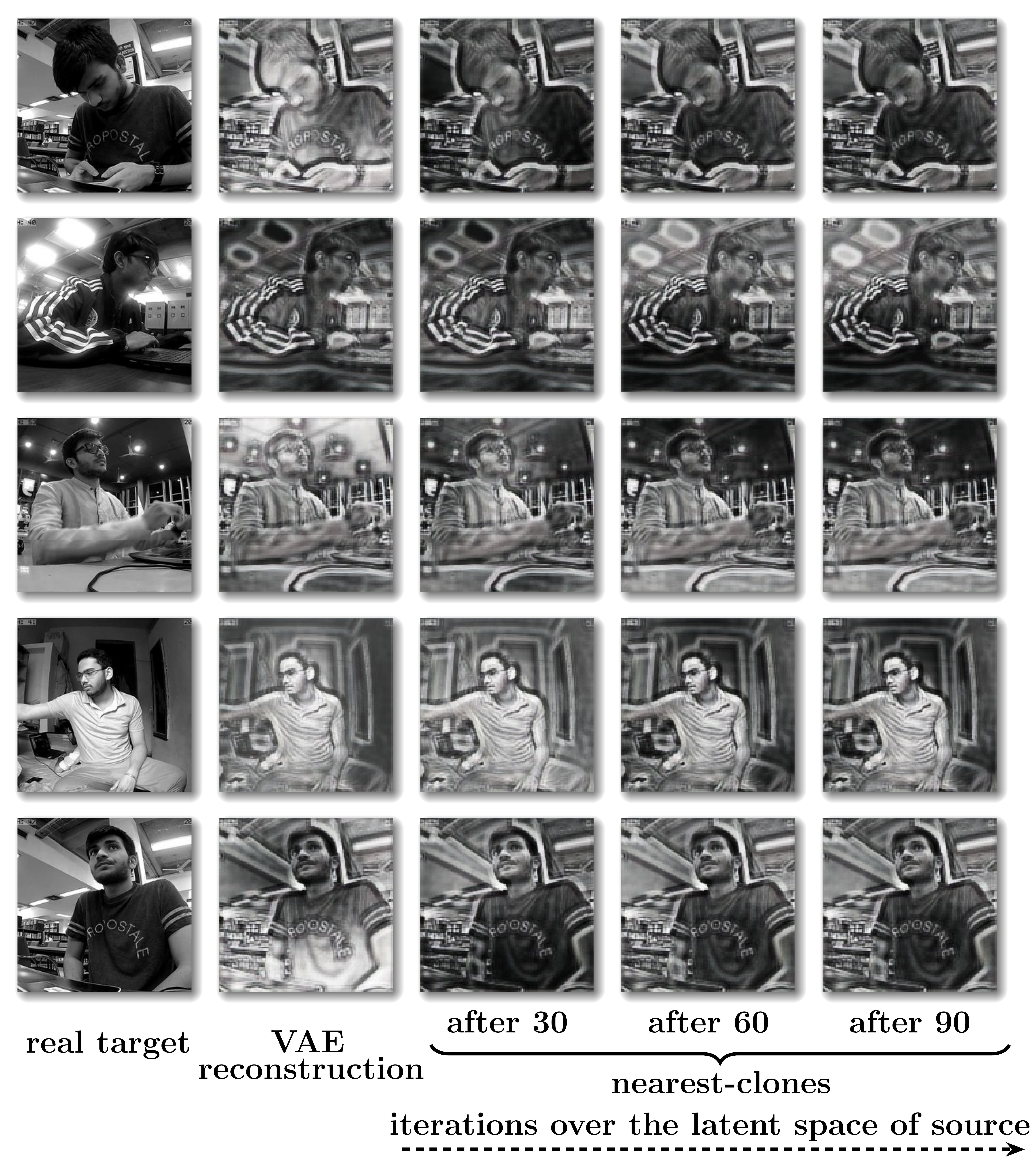}
\caption{Illustration of Latent Search (LS) in GLSS for SNV dataset. Prior to the LS, VAE reconstructed target samples are obtained. It is evident that the `nearest-clones' (images generated using LS) improve as the LS progresses. Also the quality (empirically) of `nearest-clones' are better as compared to the VAE reconstructed images. The `nearest-clones' are shown after every 30 iterations.}
\label{fig:LatentSearch}
\end{center}
\end{figure}
\begin{figure}[h]
\begin{center}
    \includegraphics[width=0.7\textwidth,height=.74\textwidth]{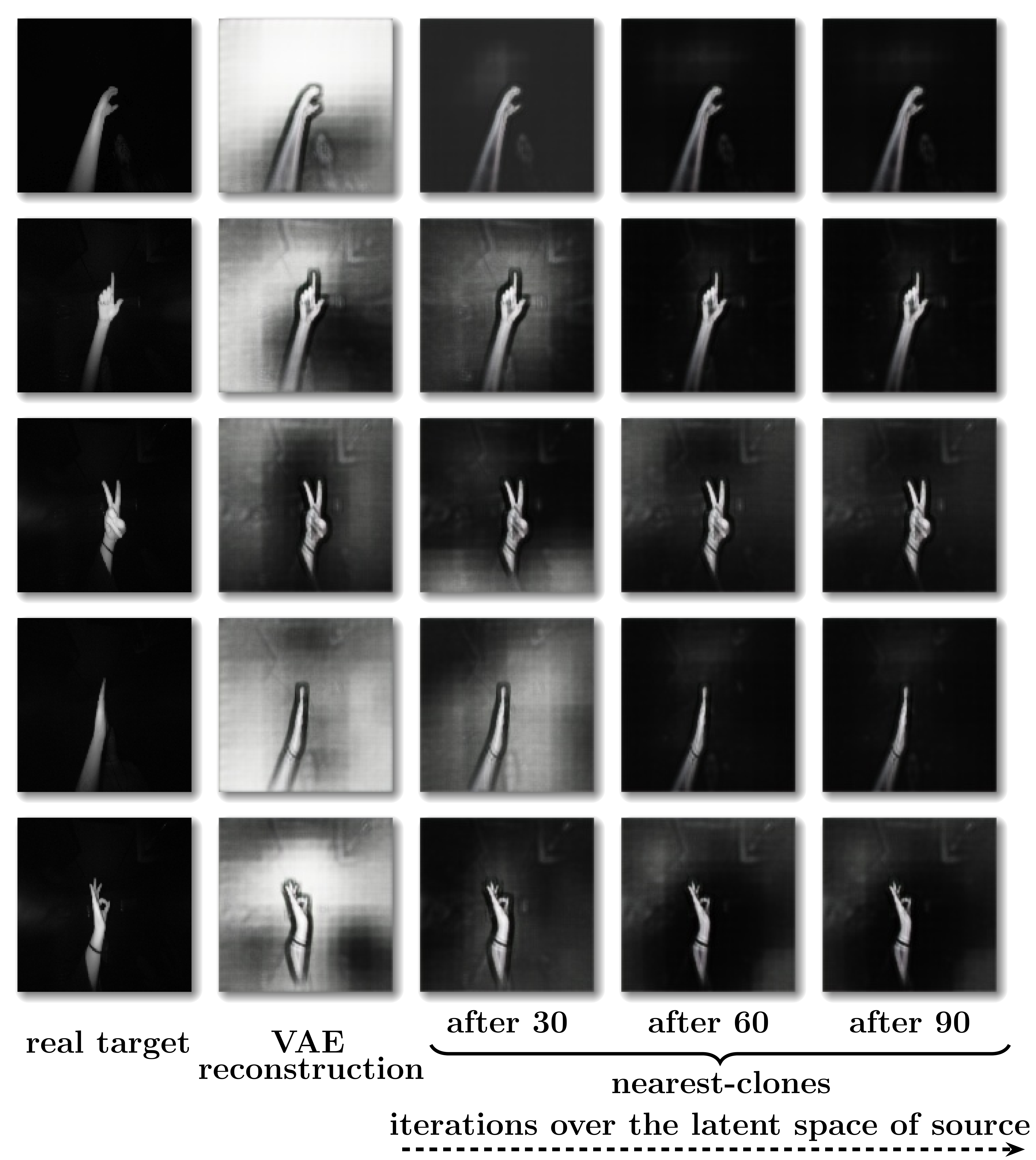}
\caption{Illustration of Latent Search (LS) in GLSS for Hand Gesture dataset. Prior to the LS, VAE reconstructed target samples are obtained. It is evident that the `nearest-clones' (images generated using LS) improve as the LS progresses. Also the quality (empirically) of `nearest-clones' are better as compared to the VAE reconstructed images. The `nearest-clones' are shown after every 30 iterations.}
\label{fig:LatentSearch}
\end{center}
\end{figure}

\begin{figure}[h]
\begin{center}
    \includegraphics[width=0.70\textwidth,height=.32\textwidth]{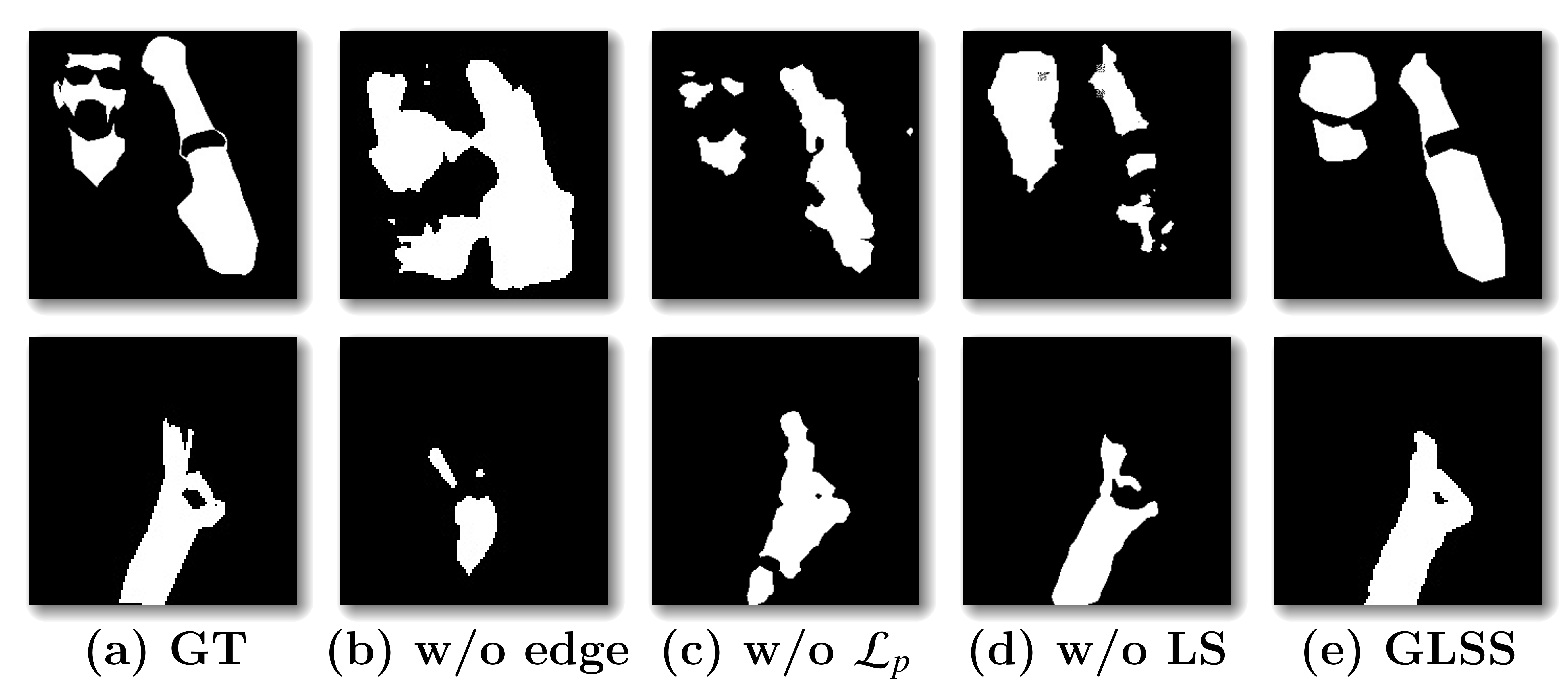}
\caption{(a) the ground truth mask for SNV and Hand Gesture datasets, (b) the predicted mask of VAE reconstructed image without edge concatenation, (c) the predicted mask of VAE reconstructed  image without $\mathcal{L}_{p}$,
(d) the predicted mask of VAE reconstructed with edge concatenation and perceptual loss when no Latent Search (LS) was performed, 
 (e) the predicted mask with GLSS.
 It is evident from the predicted masks that with edge concatenation, perceptual loss and Latent Search (LS), quality of predicted masks improve. Each component plays a significant role in improving the IoU. Hence, when all the components are employed (as in GLSS) we get the best IoU.}
\label{fig:LatentSearch}
\end{center}
\end{figure}
\begin{figure}[h]
\begin{center}
    \hspace*{-1cm}\includegraphics[scale=0.48]{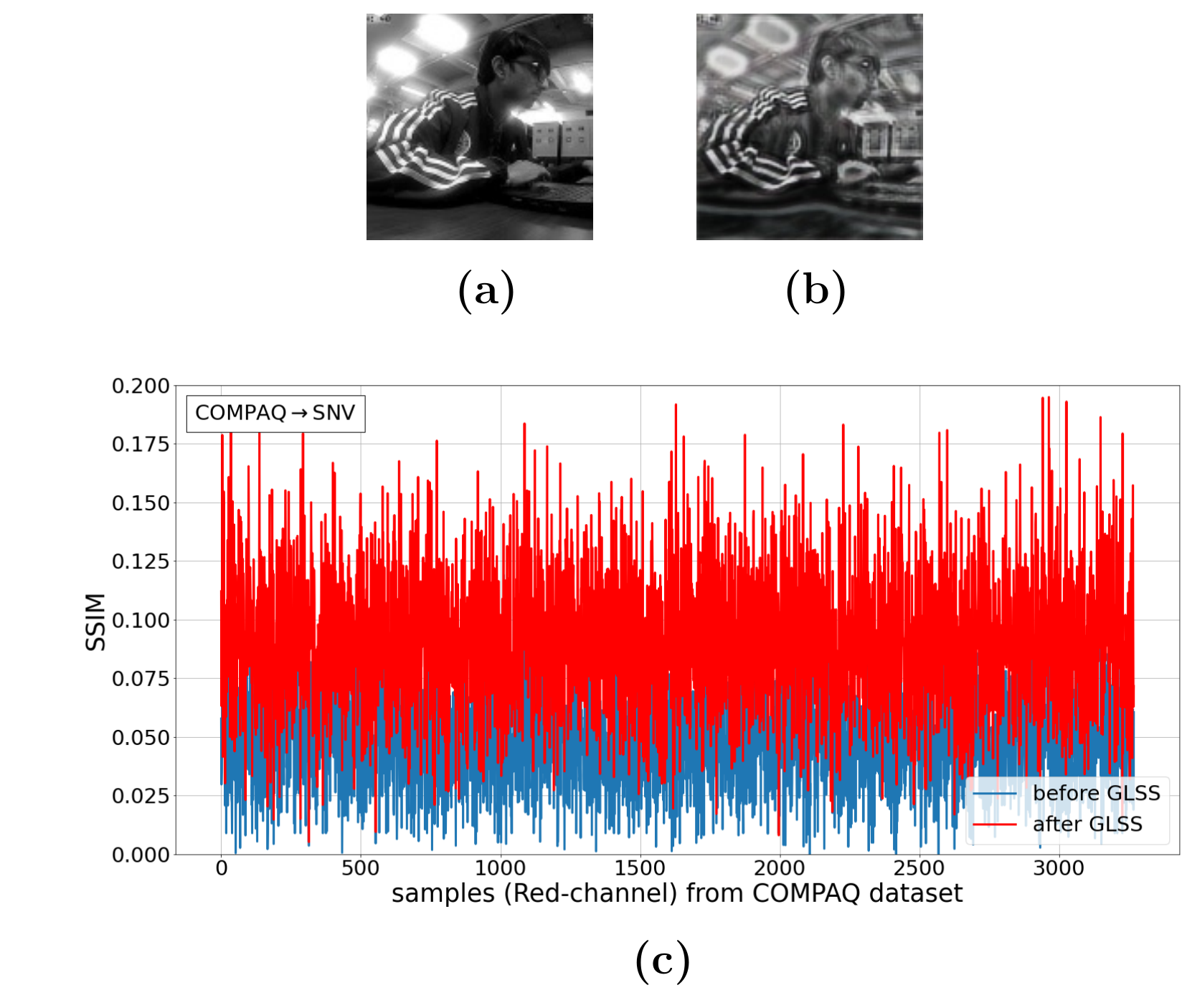}
\caption{(a) an NIR image $ \tilde{\mathbf{x}}_\mathcal{T}$ from SNV dataset (target), (b) `nearest-clone' $ \tilde{\mathbf{x}}_\mathcal{S}$ generated from GLSS, (c) Structural Similarity Index (SSIM) scores calculated between $ \tilde{\mathbf{x}}_\mathcal{T}$ and all the samples (having only Red-channel) of COMPAQ dataset (source) are shown with blue color in the plot. Similary, SSIM scores calculated between $ \tilde{\mathbf{x}}_\mathcal{S}$ and all the samples (having only Red-channel) of COMPAQ dataset are shown with red color. It is evident from the figure that the SSIM scores are higher for the `nearest-clone' $\tilde{\mathbf{x}}_\mathcal{S}$ as compared to the scores with $\tilde{\mathbf{x}}_\mathcal{T}$. It indicates that $\tilde{\mathbf{x}}_\mathcal{S}$ is more closer to the source domain (COMPAQ) as compared to $\tilde{\mathbf{x}}_\mathcal{T}$. Hence, the `nearest-clone' $\tilde{\mathbf{x}}_\mathcal{S}$ generated by GLSS for target $\tilde{\mathbf{x}}_\mathcal{T}$ is used as a proxy in the segmentation network $S_{\psi}$ which is trained only on COMPAQ dataset, thereby increasing the IoU for $\tilde{\mathbf{x}}_\mathcal{T}$.}
 \label{fig:LatentSearch}
\end{center}
\end{figure}
\begin{figure}[h]
\begin{center}
    \hspace*{-2cm}\includegraphics[scale=0.60]{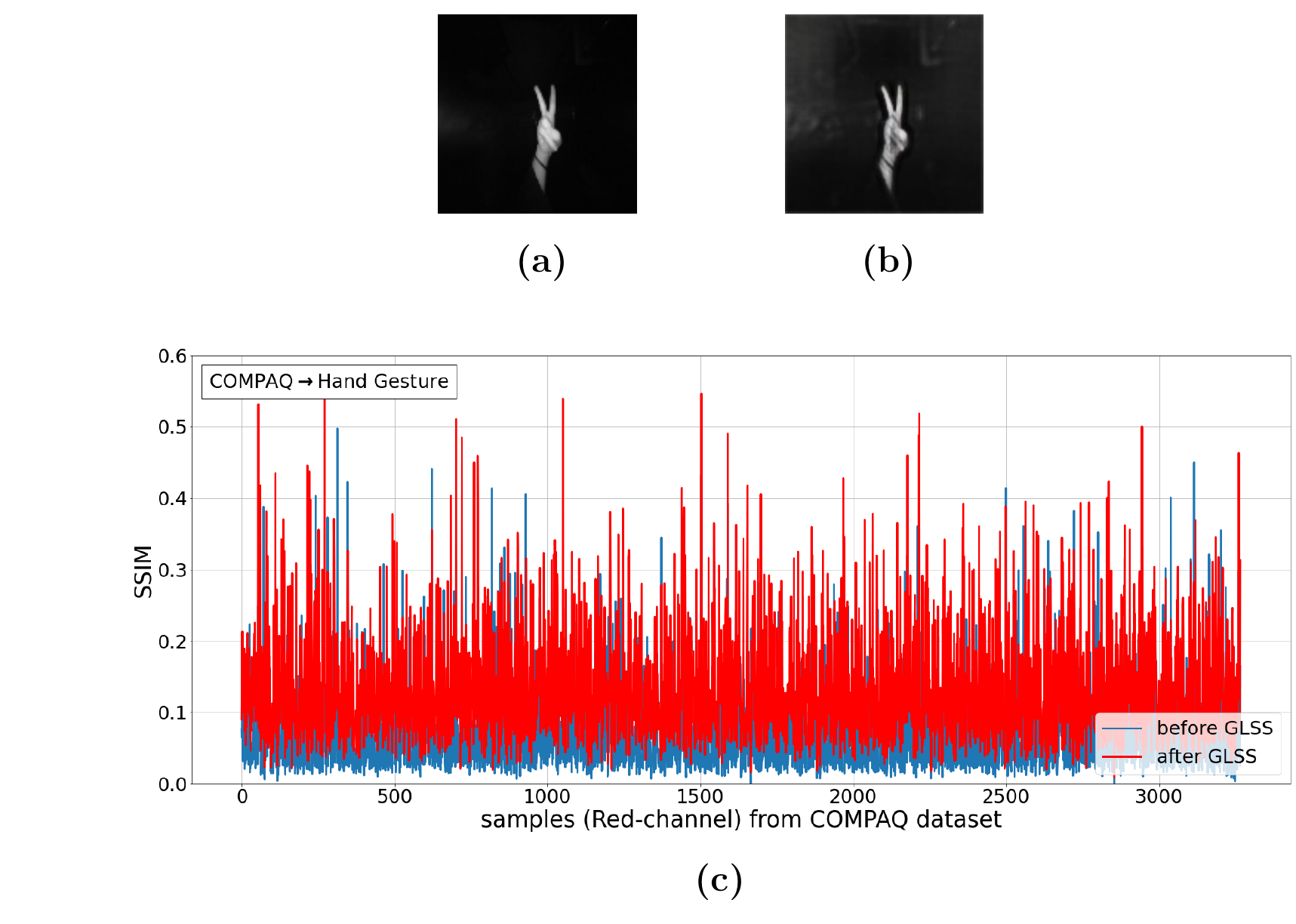}
\caption{(a) an NIR image $ \tilde{\mathbf{x}}_\mathcal{T}$ from Hand Gesture dataset (target), (b) `nearest-clone' $ \tilde{\mathbf{x}}_\mathcal{S}$ generated from GLSS, (c) Structural Similarity Index (SSIM) scores calculated between $ \tilde{\mathbf{x}}_\mathcal{T}$ and all the samples (having only Red-channel) of COMPAQ dataset (source) are shown with blue color in the plot. Similary, SSIM scores calculated between $ \tilde{\mathbf{x}}_\mathcal{S}$ and all the samples (having only Red-channel) of COMPAQ dataset are shown with red color. It is evident from the figure that the SSIM scores are higher for the `nearest-clone' $\tilde{\mathbf{x}}_\mathcal{S}$ as compared to the scores with $\tilde{\mathbf{x}}_\mathcal{T}$. It indicates that $\tilde{\mathbf{x}}_\mathcal{S}$ is more closer to the source domain (COMPAQ) as compared to $\tilde{\mathbf{x}}_\mathcal{T}$. Hence, the `nearest-clone' $\tilde{\mathbf{x}}_\mathcal{S}$ generated by GLSS for target $\tilde{\mathbf{x}}_\mathcal{T}$ is used as a proxy in the segmentation network $S_{\psi}$ which is trained only on COMPAQ dataset, thereby increasing the IoU for $\tilde{\mathbf{x}}_\mathcal{T}$.}
 
 \label{fig:LatentSearch}
\end{center}
\end{figure}

\end{document}